\def\eqref#1{equation~\ref{#1}}
\def\floor#1{\lfloor #1 \rfloor}
\def\1{\bm{1}}
\def\rs{{\textnormal{s}}}
\DeclareMathAlphabet{\mathsfit}{\encodingdefault}{\sfdefault}{m}{sl}
\SetMathAlphabet{\mathsfit}{bold}{\encodingdefault}{\sfdefault}{bx}{n}
\newcommand{\R}{\mathbb{R}}
\newcommand{\m}{\hspace{0.25mm}}
\newcommand{\D}{\mathcal{D}}
\newtheorem{theorem}{Theorem}[section]
\newtheorem{proof}[theorem]{Proof}
\newtheorem{definition}[theorem]{Definition}
\newtheorem{example}[theorem]{Example}
\newtheorem{remark}[theorem]{Remark}
\definecolor{bittersweet}{rgb}{1.0, 0.44, 0.37}
\pgfmathsetmacro{\pathyshift}{0.8}
\global\def\rs{{0, 1, 2.1, 3.3, 4.5, 5.5, 6.5}}
\global\def\lsone{{2.15, 2.3, 2.1, 2.4, 2.1, 2.4}}
\global\def\lstwo{{2.1, 2.4, 2.2, 2.5, 2.2, 2.1}}
\global\def\lsthree{{2.3, 2.1, 2.15, 2.35, 2.5, 2.2}}
\global\def\lspathraise{0.7}
\global\def\data{{0.4, 0.7, 0.9, 0.5, 0.6, 0.9, 0.8, 0.5, 0.3, 0.3, 0.8, 1.0, 0.2, 0.4, 0.7, 0.5, 1.0, 1.0, 0.8, 0.4, 0.2, 0.6, 0.8, 0.5, 0.3, 0.5, 0.2, 0.1, 0.5, 0.2, 0.4, 0.6, 0.3, 0.2, 0.1, 0.7, 0.6, 0.2, 0.5, 0.4, 0.4, 0.1}}
\definecolor{colorls1}{HTML}{66C2A5}
\definecolor{colorls2}{HTML}{8DA0CB}
\definecolor{colorls3}{HTML}{FC8D62}
\newcommand\ncdediagram[1]{
    \begin{tikzpicture}
        \pgfmathsetmacro{\logsigversion}{#1}
        
        \ifthenelse{\logsigversion=0}
            {\pgfmathsetmacro{\hreduce}{1.8}}
            {\pgfmathsetmacro{\hreduce}{0}}
        
        \draw[black, thick, ->] (0, 0) -- (7, 0);
        
        \ifthenelse{\logsigversion=0}
            {
                \foreach \i in {0, 6} {
                    \node at (\rs[\i], 0)[circle,fill, inner sep=1.5pt] {};
                }
                \node at (\rs[0], -0.25) {\tiny $t_0$};
                \node at (\rs[6], -0.25) {\tiny $t_m$};
            }
            {
                \foreach \i in {0, 1, 2, 3, 4, 5, 6} {
                    \node at (\rs[\i], 0)[circle,fill, inner sep=1.5pt] {};
                }
                \node at (\rs[0], -0.25) {\tiny $r_0$};
                \node at (\rs[1], -0.25) {\tiny $r_1$};
                \node at (\rs[2], -0.25) {\tiny $r_2$};
                \node at (\rs[3], -0.25) {\tiny $r_3$};
                \node at (\rs[4], -0.25) {\tiny $r_{m-2}$};
                \node at (\rs[5], -0.25) {\tiny $r_{m-1}$};
                \node at (\rs[6], -0.25) {\tiny $r_m$};
            }
        \node at (3.85, -0.25) {\tiny $\cdots$};

        \pgfmathsetmacro{\dataix}{0}
        \pgfmathsetmacro{\midwayprev}{0}
        \pgfmathsetmacro{\xprev}{0}
        \pgfmathsetmacro{\yprev}{0}
        \foreach \i in {0, 1, 2, 3, 4, 5} {
            \pgfmathsetmacro{\frac}{\rs[\i + 1] - \rs[\i]}
            \pgfmathsetmacro{\midway}{\rs[\i] + (\rs[\i + 1] - \rs[\i]) * 0.5}
            
            \foreach \d in {0, 1, 2, 3, 4, 5, 6} {
                \ifthenelse{\i>0 \AND \d=0}{\pgfmathsetmacro{\y}{\data[\dataix - 1] * 0.5 + 0.1}}{\pgfmathsetmacro{\y}{\data[\dataix] * 0.5 + 0.1}}
                \pgfmathsetmacro{\x}{\rs[\i] + \d * \frac * 0.1667}
                \pgfmathsetmacro{\dx}{\frac * 0.1667}
                \ifthenelse{\i=3}
                    {
                        \node at (\x, \y)[circle, draw=black, fill=green!50, inner sep=1.5pt, opacity=0.3] {};
                        \node at (\x, \y + \pathyshift)[circle, draw=black, fill=green!50, inner sep=0.5pt, densely dashed] {};
                        \draw[green!50, thick, cap=round, dashed] (\xprev, \yprev + \pathyshift) -- (\x, \y + \pathyshift);
                    }
                    {
                        \ifthenelse{\logsigversion=0}{\draw (\x, -0.05) -- (\x, 0.05);}{}
                        \node at (\x, \y)[circle, draw=black, fill=green!50, inner sep=1.5pt] {};
                        \node at (\x, \y + \pathyshift)[circle, draw=black, fill=green!50, inner sep=0.5pt] {};
                        \ifthenelse{\i=0 \AND \d=0}{}{\draw[green!50, thick, cap=round] (\xprev, \yprev + \pathyshift) -- (\x, \y + \pathyshift);}
                        \ifthenelse{\logsigversion=0}
                            {
                                \ifthenelse{\d=6 \AND \i=5}
                                    {}
                                    {
                                        \node at (\x, 3.8 - \hreduce)[circle, draw=black, fill=orange!50, inner sep=1.5pt] {};
                                    }
                            }
                            {}
                    }
                \pgfmathsetmacro{\dataixnew}{\dataix + 1}
                \global\let\dataix=\dataixnew
                \global\let\yprev=\y
                \global\let\xprev=\x
                
            }
            
            \ifthenelse{\logsigversion=1}{
                \ifthenelse{\i=3}
                    {
                        \node at (\rs[\i], 3.8)[circle, draw=black, fill=orange!50, inner sep=1.5pt] {};
                        \draw[->, dashed] (\rs[\i] + 0.08, 3.8) -- (\rs[\i+1] - 0.08, 3.8);
                    }
                    {
                        \draw[decoration={calligraphic brace, amplitude=3pt}, decorate, line width=1pt] (\rs[\i] + 0.05, 1.65) node {} -- (\rs[\i+1] - 0.05, 1.65);
                        \draw[->, dashed] (\midway, 1.8) -- (\midway, 2.0);
                        \node at (\midway, \lsone[\i])[circle, draw=black, fill=colorls1, inner sep=1pt] {};
                        \node at (\midway, \lstwo[\i])[circle, draw=black, fill=colorls2, inner sep=1pt] {};
                        \node at (\midway, \lsthree[\i])[circle, draw=black, fill=colorls3, inner sep=1pt] {};
                        \node at (\rs[\i], 3.8)[circle, draw=black, fill=orange!50, inner sep=1.5pt] {};
                        \draw[->] (\rs[\i] + 0.08, 3.8) -- (\rs[\i+1] - 0.08, 3.8);
                    }
                \ifthenelse{\i=3}
                    {
                        
                    }
                    {
                        \node at (\midway, \lsone[\i] + \lspathraise)[circle, draw=black, fill=colorls1, inner sep=.5pt] {};
                        \node at (\midway, \lstwo[\i] + \lspathraise)[circle, draw=black, fill=colorls2, inner sep=.5pt] {};
                        \node at (\midway, \lsthree[\i] + \lspathraise)[circle, draw=black, fill=colorls3, inner sep=.5pt] {};
                    }
                \ifthenelse{\i>0}
                    {   
                        \ifthenelse{\i=3 \OR \i=4}
                        {  
                            \ifthenelse{\i=3}
                                {
                                    \pgfmathsetmacro{\midmidway}{\midwayprev + (\midway - \midwayprev) * 0.5}
                                    \draw[colorls1, thick, cap=round] (\midwayprev, \lsone[\i-1] + \lspathraise) -- (\midmidway, \lsone[\i-1]*0.5 + \lsone[\i]*0.5 + \lspathraise);
                                    \draw[colorls1, thick, cap=round, densely dashed] (\midmidway, \lsone[\i-1]*0.5 + \lsone[\i]*0.5 + \lspathraise) -- (\midway, \lsone[\i] + \lspathraise);
                                    \draw[colorls2, thick, cap=round] (\midwayprev, \lstwo[\i-1] + \lspathraise) -- (\midmidway, \lstwo[\i-1]*0.5 + \lstwo[\i]*0.5 + \lspathraise);
                                    \draw[colorls2, thick, cap=round, densely dashed] (\midmidway, \lstwo[\i-1]*0.5 + \lstwo[\i]*0.5 + \lspathraise) -- (\midway, \lstwo[\i] + \lspathraise);
                                    \draw[colorls3, thick, cap=round] (\midwayprev, \lsthree[\i-1] + \lspathraise) -- (\midmidway, \lsthree[\i-1]*0.5 + \lsthree[\i]*0.5 + \lspathraise);
                                    \draw[colorls3, thick, cap=round, densely dashed] (\midmidway, \lsthree[\i-1]*0.5 + \lsthree[\i]*0.5 + \lspathraise) -- (\midway, \lsthree[\i] + \lspathraise);
                                }
                                {
                                    \pgfmathsetmacro{\midmidway}{\midwayprev + (\midway - \midwayprev) * 0.5}
                                    \draw[colorls1, thick, cap=round, densely dashed] (\midwayprev, \lsone[\i-1] + \lspathraise) -- (\midmidway, \lsone[\i-1]*0.5 + \lsone[\i]*0.5 + \lspathraise);
                                    \draw[colorls1, thick, cap=round] (\midmidway, \lsone[\i-1]*0.5 + \lsone[\i]*0.5 + \lspathraise) -- (\midway, \lsone[\i] + \lspathraise);
                                    \draw[colorls2, thick, cap=round, densely dashed] (\midwayprev, \lstwo[\i-1] + \lspathraise) -- (\midmidway, \lstwo[\i-1]*0.5 + \lstwo[\i]*0.5 + \lspathraise);
                                    \draw[colorls2, thick, cap=round] (\midmidway, \lstwo[\i-1]*0.5 + \lstwo[\i]*0.5 + \lspathraise) -- (\midway, \lstwo[\i] + \lspathraise);
                                    \draw[colorls3, thick, cap=round, densely dashed] (\midwayprev, \lsthree[\i-1] + \lspathraise) -- (\midmidway, \lsthree[\i-1]*0.5 + \lsthree[\i]*0.5 + \lspathraise);
                                    \draw[colorls3, thick, cap=round] (\midmidway, \lsthree[\i-1]*0.5 + \lsthree[\i]*0.5 + \lspathraise) -- (\midway, \lsthree[\i] + \lspathraise);
                                }
                        }
                        {
                            \draw[colorls1, thick, cap=round] (\midwayprev, \lsone[\i-1] + \lspathraise) -- (\midway, \lsone[\i] + \lspathraise);
                            \draw[colorls2, thick, cap=round] (\midwayprev, \lstwo[\i-1] + \lspathraise) -- (\midway, \lstwo[\i] + \lspathraise);
                            \draw[colorls3, thick, cap=round] (\midwayprev, \lsthree[\i-1] + \lspathraise) -- (\midway, \lsthree[\i] + \lspathraise);
                        }
                    }
                    {
                        \draw[colorls1, thick, cap=round] (\midwayprev, \lsone[\i] + \lspathraise) -- (\midway, \lsone[\i] + \lspathraise);
                        \draw[colorls2, thick, cap=round] (\midwayprev, \lstwo[\i] + \lspathraise) -- (\midway, \lstwo[\i] + \lspathraise);
                        \draw[colorls3, thick, cap=round] (\midwayprev, \lsthree[\i] + \lspathraise) -- (\midway, \lsthree[\i] + \lspathraise);
                    }
                \ifthenelse{\i=5} 
                    {
                        \draw[colorls1, thick, cap=round] (\midway, \lsone[\i] + \lspathraise) -- (\rs[\i+1], \lsone[\i] + \lspathraise);
                        \draw[colorls2, thick, cap=round] (\midway, \lstwo[\i] + \lspathraise) -- (\rs[\i+1], \lstwo[\i] + \lspathraise);
                        \draw[colorls3, thick, cap=round] (\midway, \lsthree[\i] + \lspathraise) -- (\rs[\i+1], \lsthree[\i] + \lspathraise);
                    }
                    {}
                }
                {
                }
                
            \global\let\midwayprev=\midway
        }
        
        \ifthenelse{\logsigversion=1}{}{\draw[->, dashed] (\rs[3] + 0.08, 3.8 - \hreduce) -- (\rs[4] - 0.08, 3.8 - \hreduce);}
        
        \draw[blue!50, thick, cap=round] (\rs[0], 4.8 - \hreduce) .. controls (0.5, 3.5 - \hreduce) and (2, 5.3 - \hreduce) .. (\rs[3], 4.8 - \hreduce);
        \draw[blue!50, thick, cap=round, dashed] (\rs[3], 4.8 - \hreduce) .. controls (\rs[3] + 0.2, 4.7 - \hreduce) and (\rs[4] - 0.2, 4.3 - \hreduce) .. (\rs[4], 4.3 - \hreduce);
        \draw[blue!50, thick, cap=round] (\rs[4], 4.3 - \hreduce) .. controls (\rs[4] + 0.3, 4.2 - \hreduce) and (\rs[6] - 0.4, 4.4 - \hreduce) .. (\rs[6], 4.7 - \hreduce);
    
        \ifthenelse{\logsigversion=1}
            {\node at (4, 1.7) {$\cdots$};}
            {}
        
        \node at (8, 0) {\footnotesize Time};
        \node (data) at (8, 0.4) {\footnotesize Data $\mathbf{x}$};
        \node (path) at (8, 1.15) {\footnotesize Path $X$};
        \ifthenelse{\logsigversion=1}{
            \node (logsig) at (8, 2.1) {\footnotesize $\mathrm{LogSig}_{r_i, r_{i+1}}(X)$}; 
            \node (logsig_path) at (8, 2.95) {\footnotesize Log-signature path}; 
        }{}
        \node (hidden) at (8, 4.7 - \hreduce) {\footnotesize Hidden state $Z_t$}; 
    
        \draw[->] (data) -- (path);
        \ifthenelse{\logsigversion=1}
            {
                \draw[->] (path) -- (logsig);
                \draw[->] (logsig) -- (logsig_path);
                \draw [decorate, decoration={calligraphic brace,amplitude=1.5pt, mirror}] (6.5,3.6) -- node[midway, right, xshift=-5.5pt] {\scriptsize \begin{tabular}{l}Integration\\steps\end{tabular}} (6.5,4 - \hreduce);
                \draw[->] (logsig_path) -- (hidden);
            }
            {
                \draw [decorate, decoration={calligraphic brace,amplitude=1.5pt, mirror}] (6.5, 3.6 - \hreduce) -- node[midway, right, xshift=-5.5pt] {\scriptsize \begin{tabular}{l}Integration\\steps\end{tabular}} (6.5, 4 - \hreduce);
                \draw[->] (path) -- (hidden);
            }

    \end{tikzpicture}
}
\def\arrnocomma {
    (0.0, 0.5903005787234183)
    (0.15853658536585366, 0.31548220294099943)
    (0.3170731707317073, 0.5723798869840881)
    (0.47560975609756095, 0.7592559931584444)
    (0.6341463414634146, 0.46000436695787333)
    (0.7926829268292683, 0.8268714937427233)
    (0.9512195121951219, 0.8165360374858748)
    (1.1097560975609757, 0.40954054026875836)
    (1.2682926829268293, 0.7450130964733412)
    (1.4268292682926829, 0.3631221102604148)
    (1.5853658536585367, 0.7482435510764813)
    (1.7439024390243902, 0.917384718293228)
    (1.9024390243902438, 0.5573926930403175)
    (2.0609756097560976, 0.7663748137356283)
    (2.2195121951219514, 0.6694620736634)
    (2.3780487804878048, 1.1176346402925919)
    (2.5365853658536586, 0.949415028614423)
    (2.6951219512195124, 0.3177597410442012)
    (2.8536585365853657, 0.6693951873464801)
    (3.0121951219512195, 0.9294900828561168)
    (3.1707317073170733, 0.9328267954184619)
    (3.3292682926829267, 0.663301293119932)
    (3.4878048780487805, 1.256064569672367)
    (3.6463414634146343, 0.472180216368223)
    (3.8048780487804876, 1.2501090850305931)
    (3.9634146341463414, 0.6636785039153019)
    (4.121951219512195, 0.650984855539094)
    (4.280487804878049, 0.7248161591552023)
    (4.439024390243903, 0.5481046159182468)
    (4.597560975609756, 1.0235344968993427)
    (4.7560975609756095, 1.0369378751495655)
    (4.914634146341464, 1.1262114265391723)
    (5.073170731707317, 0.3962487903451824)
    (5.2317073170731705, 0.9389413773558675)
    (5.390243902439025, 1.2096540128425803)
    (5.548780487804878, 1.079908936992647)
    (5.7073170731707314, 1.2109400195109703)
    (5.865853658536586, 0.37773305265847396)
    (6.024390243902439, 0.5808720577974682)
    (6.182926829268292, 0.5390596533228442)
    (6.341463414634147, 0.6221988967136689)
    (6.5, 0.7650561612631395)
}
\newcommand{\arrcomma}{
    (0.0, 0.5903005787234183), (0.15853658536585366, 0.31548220294099943), (0.3170731707317073, 0.5723798869840881), (0.47560975609756095, 0.7592559931584444), (0.6341463414634146, 0.46000436695787333), (0.7926829268292683, 0.8268714937427233), (0.9512195121951219, 0.8165360374858748), (1.1097560975609757, 0.40954054026875836), (1.2682926829268293, 0.7450130964733412), (1.4268292682926829, 0.3631221102604148), (1.5853658536585367, 0.7482435510764813), (1.7439024390243902, 0.917384718293228), (1.9024390243902438, 0.5573926930403175), (2.0609756097560976, 0.7663748137356283), (2.2195121951219514, 0.6694620736634), (2.3780487804878048, 1.1176346402925919), (2.5365853658536586, 0.949415028614423), (2.6951219512195124, 0.3177597410442012), (2.8536585365853657, 0.6693951873464801), (3.0121951219512195, 0.9294900828561168), (3.1707317073170733, 0.9328267954184619), (3.3292682926829267, 0.663301293119932), (3.4878048780487805, 1.256064569672367), (3.6463414634146343, 0.472180216368223), (3.8048780487804876, 1.2501090850305931), (3.9634146341463414, 0.6636785039153019), (4.121951219512195, 0.650984855539094), (4.280487804878049, 0.7248161591552023), (4.439024390243903, 0.5481046159182468), (4.597560975609756, 1.0235344968993427), (4.7560975609756095, 1.0369378751495655), (4.914634146341464, 1.1262114265391723), (5.073170731707317, 0.3962487903451824), (5.2317073170731705, 0.9389413773558675), (5.390243902439025, 1.2096540128425803), (5.548780487804878, 1.079908936992647), (5.7073170731707314, 1.2109400195109703), (5.865853658536586, 0.37773305265847396), (6.024390243902439, 0.5808720577974682), (6.182926829268292, 0.5390596533228442), (6.341463414634147, 0.6221988967136689), (6.5, 0.7650561612631395)
}
\global\def\xarrm{{0.0, 0.15853658536585366, 0.3170731707317073, 0.47560975609756095, 0.6341463414634146, 0.7926829268292683, 0.9512195121951219, 1.1097560975609757, 1.2682926829268293, 1.4268292682926829, 1.5853658536585367, 1.7439024390243902, 1.9024390243902438, 2.0609756097560976, 2.2195121951219514, 2.3780487804878048, 2.5365853658536586, 2.6951219512195124, 2.8536585365853657, 3.0121951219512195, 3.1707317073170733, 3.3292682926829267, 3.4878048780487805, 3.6463414634146343, 3.8048780487804876, 3.9634146341463414, 4.121951219512195, 4.280487804878049, 4.439024390243903, 4.597560975609756, 4.7560975609756095, 4.914634146341464, 5.073170731707317, 5.2317073170731705, 5.390243902439025, 5.548780487804878, 5.7073170731707314, 5.865853658536586, 6.024390243902439, 6.182926829268292, 6.341463414634147, 6.5}}
\global\def\yarrm{{0.5903005787234183, 0.31548220294099943, 0.5723798869840881, 0.7592559931584444, 0.46000436695787333, 0.8268714937427233, 0.8165360374858748, 0.40954054026875836, 0.7450130964733412, 0.3631221102604148, 0.7482435510764813, 0.917384718293228, 0.5573926930403175, 0.7663748137356283, 0.6694620736634, 1.1176346402925919, 0.949415028614423, 0.3177597410442012, 0.6693951873464801, 0.9294900828561168, 0.9328267954184619, 0.663301293119932, 1.256064569672367, 0.472180216368223, 1.2501090850305931, 0.6636785039153019, 0.650984855539094, 0.7248161591552023, 0.5481046159182468, 1.0235344968993427, 1.0369378751495655, 1.1262114265391723, 0.3962487903451824, 0.9389413773558675, 1.2096540128425803, 1.079908936992647, 1.2109400195109703, 0.37773305265847396, 0.5808720577974682, 0.5390596533228442, 0.6221988967136689, 0.7650561612631395}}
\global\def\xarrlen{0, 1, 2, 3, 4, 5, 6, 7, 8, 9, 10, 11, 12, 13, 14, 15, 16, 17, 18, 19, 20, 21, 22, 23, 24, 25, 26, 27, 28, 29, 30, 31, 32, 33, 34, 35, 36, 37, 38, 39, 40, 41}
\newcommand\mypath[1]{
    \draw [green!50, line width=0.5mm] plot [#1] coordinates {
        \arrnocomma
    };
}
\newcommand\ncdenrdediagram[1]{
    \begin{tikzpicture}
        \pgfmathsetmacro{\logsigversion}{#1}
    
        \draw[black, thick, ->] (0, 0) -- (7, 0);
        
        \pgfmathsetmacro{\summaryh}{1.7}
        \pgfmathsetmacro{\summaryeps}{0.01}
        \pgfmathsetmacro{\summaryd}{0.5}
        \pgfmathsetmacro{\cdeh}{2.5}
        \pgfmathsetmacro{\cded}{1}
        
        \draw (\rs[0], -0.05) -- (\rs[0], 0.05);
        \draw (\rs[6], -0.05) -- (\rs[6], 0.05);
        \node at (\rs[0], -0.25) {\small $t_0$};
        \node at (\rs[6], -0.25) {\small $T$};

        \ifthenelse{\logsigversion=1}
            {
                \foreach \d in {1, 2, 3, 4, 5, 6} {
                    \draw [fill=red!10, draw=red!50, rounded corners, dashed] (\rs[\d - 1] + \summaryeps, \summaryh) rectangle (\rs[\d] - \summaryeps, \summaryh + \summaryd) node[pos=.5] {};
                    
                    \draw [black!50, ->] (0.5 * \rs[\d - 1] + 0.5 * \rs[\d], \summaryh + \summaryd) -- (0.5 * \rs[\d - 1] + 0.5 * \rs[\d], \cdeh);
                    
                    \draw [black!50, ->] (0.5 * \rs[\d - 1] + 0.5 * \rs[\d], \cdeh + \cded) -- (0.5 * \rs[\d - 1] + 0.5 * \rs[\d], \cded + \cdeh + 0.5);
                }
            }
            {
            }
            
        \pgfmathsetmacro{\cdeeps}{0.}
        \ifthenelse{\logsigversion=0}
            {
                \draw [fill=orange!10, draw=orange!50, rounded corners, dashed] (\rs[0], \cdeh - \cdeeps) rectangle (\rs[6], \cdeh + \cded - \cdeeps) node[pos=.5] {Neural CDE};
            }
            {
                \draw [fill=orange!10, draw=orange!50, rounded corners, dashed] (\rs[0], \cdeh) rectangle (\rs[6], \cdeh + \cded) node[pos=.5] {Neural RDE};
            }
        
        \ifthenelse{\logsigversion=1}{
                \mypath{}
            }
            {
                \mypath{smooth}
            }
        \foreach \coord [count=\i] in \arrcomma {
            \node at \coord [circle, draw=black, fill=green!50, inner sep=0.5pt] {};
        }
        \foreach \i in \xarrlen {
            \ifthenelse{\logsigversion=1}{
                \draw [black!50, ->] (\xarrm[\i], \yarrm[\i]) -- (\xarrm[\i], \summaryh);
            }
            {
                \draw [black!50, ->] (\xarrm[\i], \yarrm[\i]) -- (\xarrm[\i], \cdeh);
                \draw [black!50, ->] (\xarrm[\i], \cdeh + \cded) -- (\xarrm[\i], \cdeh + \cded + 0.5);
            }
        }
        
        \pgfmathsetmacro{\hiddenh}{1.5}
        \draw[blue!50, line width=0.5mm, cap=round] (\rs[0], 3 + \hiddenh) .. controls (1.5, 1.6 + \hiddenh) and (\rs[4], 4 + \hiddenh) .. (\rs[6], 2.5 + \hiddenh);
        
        \node at (8, 0) {\footnotesize Time};
        \ifthenelse{\logsigversion=0}{
                \node (cde) at (8, 3) {CDE}; 
                \node (path) at (8, 0.8) {\footnotesize \begin{tabular}{c}Path, $X_t$\\(smoothed)
                \end{tabular}};
            }
            {
                \node (path) at (8, 0.8) {\footnotesize \begin{tabular}{c}Path, $X_t$\\ \end{tabular}};
            }
        \ifthenelse{\logsigversion=1}{
            \node (summarisations) at (8, 1.95) {\footnotesize Summaries}; 
            \node (rde) at (8, 3) {RDE}; 
        }{}
        \node (hidden) at (8, 4.2) {\footnotesize Response};
    
        \ifthenelse{\logsigversion=1}
            {
                \draw[->] (path) -- (summarisations);
                \draw[->] (summarisations) -- (rde);
                \draw[->] (rde) -- (hidden);
            }
            {
                \draw[->] (path) -- (cde);
                \draw[->] (cde) -- (hidden);
            }

    \end{tikzpicture}
}
\tikzset{middlearrow/.style={
        decoration={markings,
            mark= at position 0.5 with {\arrow{#1}} ,
        },
        postaction={decorate}
    }
}
\tikzset{input/.style={black, draw=green!50, fill=green!50, rectangle, minimum height=0.8cm}}
\tikzset{hidden/.style={black, draw=blue!50, fill=blue!50, rectangle, minimum height=0.8cm}}
\tikzset{hidden_square/.style={black, draw=blue!50, fill=blue!50, rectangle, minimum height=3.5cm}}
\tikzset{logsig/.style={black, draw=red!50, fill=red!50, rectangle, minimum height=0.8cm}}
\tikzset{>=latex}
\pgfplotsset{compat=1.10}
\newcommand{\logsig}{\mathrm{LogSig}}
\newcommand{\dby}{\mathrm{d}}
\newcommand{\reals}{\mathbb{R}}
\newcommand{\naturals}{\mathbb{N}}
\newcommand{\restr}[2]{{\left.\kern-\nulldelimiterspace #1 \right|_{#2}}}
\icmltitlerunning{Neural Rough Differential Equations for Long Time Series}
\begin{document}

\twocolumn[
\icmltitle{Neural Rough Differential Equations for Long Time Series}




\begin{icmlauthorlist}
\icmlauthor{James Morrill}{to,goo}
\icmlauthor{Cristopher Salvi}{to,goo}
\icmlauthor{Patrick Kidger}{to,goo}
\icmlauthor{James Foster}{to,goo}
\icmlauthor{Terry Lyons}{to,goo}

\end{icmlauthorlist}

\icmlaffiliation{to}{Mathematical Institute, University of Oxford, UK}
\icmlaffiliation{goo}{The Alan Turing Institute, British Library, UK}

\icmlcorrespondingauthor{James Morrill}{morrill@maths.ox.ac.uk}

\icmlkeywords{neural differential equations, neural odes, neural cdes, neural rdes, neural ordinary differential equations, neural controlled differential equations, neural rough differential equations, machine learning, deep learning, signatures, rough analysis, rough path theory, time Series, long time series}

\vskip 0.3in
]



\printAffiliationsAndNotice{}  

\begin{abstract}
Neural controlled differential equations (CDEs) are the continuous-time analogue of recurrent neural networks, as Neural ODEs are to residual networks, and offer a memory-efficient continuous-time way to model functions of potentially irregular time series. Existing methods for computing the forward pass of a Neural CDE involve embedding the incoming time series into path space, often via interpolation, and using evaluations of this path to drive the hidden state. Here, we use rough path theory to extend this formulation. Instead of directly embedding into path space, we instead represent the input signal over small time intervals through its \textit{log-signature}, which are statistics describing how the signal drives a CDE. This is the approach for solving \textit{rough differential equations} (RDEs), and correspondingly we describe our main contribution as the introduction of Neural RDEs. This extension has a purpose: by generalising the Neural CDE approach to a broader class of driving signals, we demonstrate particular advantages for tackling long time series. In this regime, we demonstrate efficacy on problems of length up to 17k observations and observe significant training speed-ups, improvements in model performance, and reduced memory requirements compared to existing approaches.

\end{abstract}

\section{Introduction}
Neural controlled differential equations (CDEs) \citep{kidger2020neural} are the continuous-time analogue to recurrent neural networks (RNNs) and provide a natural method for modelling temporal dynamics with neural networks.

Neural CDEs are similar to neural ordinary differential equations (ODEs), as popularised by \citet{neural2018ode}. A Neural ODE is determined by its initial condition, without a direct way to modify the trajectory given subsequent observations. In contrast, the vector field of a Neural CDE depends upon the time-varying data, so that the trajectory of the system is driven by a sequence of observations.

\subsection{Controlled Differential Equations}

Let $a, b \in \mathbb{R}$ with $a < b$, and let $v, w \in \mathbb{N}$. Let $\xi \in \mathbb{R}^w$. Let $X \colon [a, b] \to \mathbb{R}^v$ be a continuous function of bounded variation (which is for example implied by it being Lipschitz), and let $f \colon \mathbb{R}^w~\to~\mathbb{R}^{w \times v}$ be continuous.

Then we may define $Z \colon [a, b] \to \mathbb{R}^w$ as the unique solution to the \textit{controlled differential equation}
\begin{equation}
    Z_a = \xi,\quad Z_t = Z_a + \int^{t}_{a} f(Z_s)\,\dby X_s \quad \text{for } t \in (a, b].
    \label{eq:kidger_cde}
\end{equation}
The notation ``$f(Z_s) \dby X_s$'' denotes a matrix-vector product. ``$\dby X_s$'' itself denotes a Riemann--Stieltjes integral: if $X$ is differentiable then 
\begin{multline}
        \int_a^t f(Z_s) \dby X_s = \int_a^t f(Z_s) \dot X_s \dby s,\\ \text{with} \; \dot X_s = \frac{\dby X_r}{\dby r}(s).
\end{multline}

If in equation (\ref{eq:kidger_cde}), $\dby X_s$ was replaced with $\dby s$, then the equation would just be an ODE. Using $\dby X_s$ causes the solution to depend continuously on the evolution of $X$. We say that the solution is ``driven by the control $X$".

Next, we recall the definition of a Neural CDE as introduced in \citet{kidger2020neural}.

\begin{figure*}
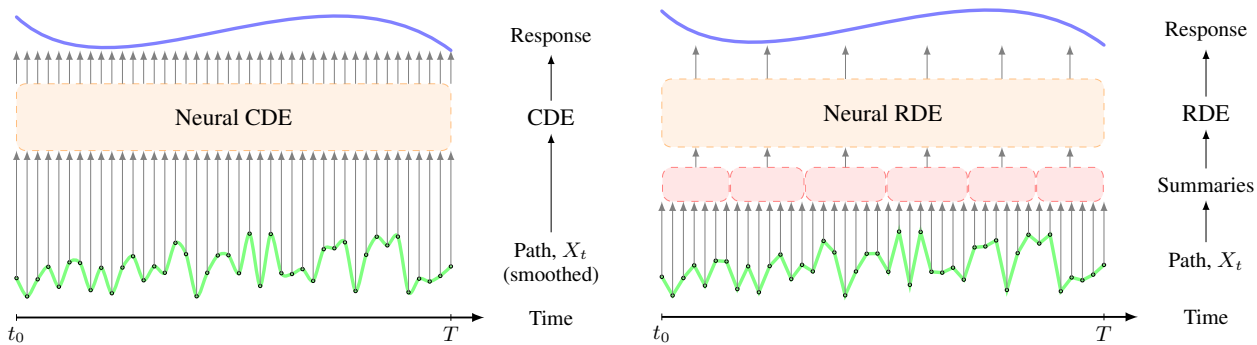

    \vspace{-1em}
    \begin{subfigure}[t]{.5\linewidth}
        \centering
            \resizebox{\linewidth}{!}{
                \ncdenrdediagram{0}
            }
    \end{subfigure}%
    \begin{subfigure}[t]{.5\linewidth}
        \centering
        \resizebox{\linewidth}{!}{
            \ncdenrdediagram{1}
        }
    \end{subfigure}
    \vspace{-1.5em}
    \caption{Here we give a high level comparison of the CDE and the RDE formulations. \textbf{Left:} The original CDE formulation where the data is smoothly interpolated and pointwise derivative information is used to drive the CDE. \textbf{Right:} The corresponding rough approach. Local interval summarisations of the data are computed and used to drive the response over the interval.}
    \label{fig:ncde_nrde_plot}
\end{figure*}

\subsection{Neural Controlled Differential Equations}

Consider a time series $\mathbf{x}$ as a collection of points $x_i \in \mathbb{R}^{v-1}$ with corresponding time-stamps $t_i \in \mathbb{R}$ such that $\mathbf{x} = ((t_0, x_0), (t_1, x_1), ..., (t_n, x_n))$, and $t_0 < ... < t_n$.

Let $X \colon [t_0, t_n] \to \reals^{v}$ be some interpolation of the data such that $X_{t_i} = (t_i, x_i)$. In \citet{kidger2020neural} the authors use natural cubic splines to ensure differentiability of the control $X$, so as to treat the term ``$\dby X_s$'' in equation (\ref{eq:kidger_cde}) as ``$\dot X_s \dby s$''. 


Let $\xi_\theta \colon \reals^{v} \to \reals^w$ and $f_\theta \colon \reals^w \to \reals^{w \times v}$ be two neural networks and let $\ell_\theta \colon \reals^w \to \reals^{q}$ be a linear map, for some output dimension $q \in \naturals$. Here $\theta$ is used to denote dependence on learnable parameters.

We define $Z$ as the hidden state and $Y$ as the output of a Neural CDE driven by $X$ if
\begin{multline}
    Z_{t_0} = \xi_\theta(t_0, x_0), \text{ with } Z_t = Z_{t_0} + \int^{t}_{t_0} f_\theta(Z_s)\,\dby X_s, \\
    \text{and } Y_t = \ell_\theta(Z_t) \text{ for } t \in (t_0, t_n]
    \label{eq:kidger_ncde}
\end{multline}
That is -- just like an RNN -- we have an evolving hidden state $Z$, which is fed into a linear map to produce an output $Y$. This formulation is a universal approximator \citep[Appendix B]{kidger2020neural}. The output may be either the time-evolving $Y_t$ or just the final $Y_{t_n}$. This is then fed into a loss function ($L^2$, cross entropy, \ldots) and trained via stochastic gradient descent in the usual way.

To compute the integral of equation (\ref{eq:kidger_ncde}) in \citet{kidger2020neural}, $X$ is assumed differentiable and the CDE is simply rewritten as an ODE of the form
\begin{equation}
     Z_t = Z_{t_0} + \int^{t}_{t_0} g_{\theta, X}(Z_s, s)\,\dby s,
     \label{eq:kidger_cde_evaluation}
\end{equation}
where 
\begin{equation}
    g_{\theta, X}(Z, s) = f_\theta(Z) \dot X_s. 
    \label{eq:kidger_g_X}
\end{equation}
This simple observation allows for incorporating the time-varying data $X$ driving the CDE into the vector field $g_{\theta,X}$ of the equivalent ODE (\ref{eq:kidger_cde_evaluation}). In doing so existing tools for Neural ODEs can be used to carry out the forward pass and backpropagate via adjoint methods.





\colorlet{lightgray}{gray!10}
\definecolor{colorp1}{HTML}{4878d0}
\definecolor{colorp2}{HTML}{ee854a}
\definecolor{colorp3}{HTML}{6acc64}
\definecolor{colord1}{HTML}{003f5c}
\definecolor{colord2}{HTML}{bc5090}
\definecolor{colord3}{HTML}{ff6361}
\definecolor{colorls1}{HTML}{66C2A5}
\definecolor{colorls2}{HTML}{8DA0CB}
\definecolor{colorls3}{HTML}{FC8D62}

\global\def\xs{{0.        , 0.13157895, 0.26315789, 0.39473684, 0.52631579,
       0.65789474, 0.78947368, 0.92105263, 1.05263158, 1.18421053,
       1.31578947, 1.44736842, 1.57894737, 1.71052632, 1.84210526,
       1.97368421, 2.10526316, 2.23684211, 2.36842105, 2.5}}
\global\def\ys{{0.        , 0.48565753, 0.84647908, 1.09613282, 1.24828692,
       1.31660956, 1.31476892, 1.25643315, 1.15527045, 1.02494897,
       0.8791369 , 0.73150241, 0.59571366, 0.48543884, 0.41434611,
       0.39610366, 0.44437965, 0.57284225, 0.79515964, 1.125}}
       
\global\def\plotrange{0, 1, 2, 4, 5, 8, 11, 12, 13, 15, 18, 19}

\begin{figure*}[t]
    \centering
    \resizebox{0.86\linewidth}{!}{
        \begin{tikzpicture}
            \draw[draw=white, fill=lightgray] (0, 0) grid (5, 3) rectangle (0, 0);

                \foreach \i in \plotrange {
                    \node at (\xs[\i] * 2, 0.5 + \ys[\i] * 1.5)[circle, draw=black, fill=green!50, inner sep=1.5pt] {};
                }
            \node at (2.5, 3) [above=2mm] {Data, $\mathbf{x}$};
            \draw[black, ->] (0, 0) -- (0, 3.03);
            \draw[black, ->] (0, 0) -- (5.03, 0);
            \node at (5.3, 0) {\small $X^1$};
            \node at (0, 3.3) {\small $X^2$};
            
            \draw[->, line width=0.2mm] (5.2, 1.5) -- (5.9, 1.5);
            
            \draw[draw=white, fill=lightgray] (6, 0) grid (11, 3) rectangle (6, 0);
            \pgfmathsetmacro{\xprev}{6 + 0.05}
            \pgfmathsetmacro{\yprev}{0+0.5}
            \draw[colorls1, line width=0.5mm, dashed, ->] (\xprev + 0.05, \yprev) -- (\xs[19] * 2 + 6 - 0.1, \yprev);
            \draw[colorls2, line width=0.5mm, dashed, ->] (\xs[19] * 2 + 6, \yprev) -- (\xs[19] * 2 + 6, 0.5 + \ys[19] * 1.5 - 0.1);
            \node at (6 + 2.5, \yprev) [above] {\footnotesize $\Delta X^1$};
            \node at (\xs[19] * 2 + 6, 1)  [right] {\footnotesize $\Delta X^2$};
            \draw[green!50, xshift=6cm, name path=one] plot coordinates {
                (0, 0.5 + 1.5 * 0) (2 * 0.13157894736842105, 0.5 + 1.5 * 0.48565753025222336) (2 * 0.2631578947368421, 0.5 + 1.5 * 0.846479078582884) (0.5263157894736842 * 2, 0.5 + 1.5 * 1.2482869222918795) (0.6578947368421052 * 2, 0.5 + 1.5 * 1.3166095640763957) (1.0526315789473684 * 2, 0.5 + 1.5 * 1.1552704475871118) (1.4473684210526314 * 2, 0.5 + 1.5 * 0.731502405598484) (1.5789473684210527 * 2, 0.5 + 1.5 * 0.5957136608835107) (1.7105263157894737 * 2, 0.5 + 1.5 * 0.4854388394809739) (1.9736842105263157 * 2, 0.5 + 1.5 * 0.39610365942557224) (2.3684210526315788 * 2, 0.5 + 1.5 * 0.7951596442630118) (2.5 * 2, 0.5 + 1.5 * 1.125)
            };
            \draw[green!50, xshift=6cm, name path=two] plot coordinates {
                (0, 0.5) (2.5 * 2, 0.5 + 1.5 * 1.125)
            };
            \tikzfillbetween[
                of=one and two, split
            ] {pattern=north west lines, colorls3!50};
            \node at (7.1, 1.7) [right=0.1mm] {\footnotesize $A_{-}$};
            \node at (9.6, 1.5) [right=0.1mm] {\footnotesize $A_{+}$};
            
            \foreach \i in \plotrange {
                \pgfmathsetmacro{\x}{\xs[\i] * 2 + 6}
                \pgfmathsetmacro{\y}{0.5 + \ys[\i] * 1.5}
                \node at (\x, \y)[circle, draw=black, fill=green!50, inner sep=1.5pt] {};
                \draw[green!50, thick, cap=round] (\xprev, \yprev) -- (\x, \y);
                \global\let\yprev=\y
                \global\let\xprev=\x
            }
            \node at (6 + 2.5, 3) [above=2mm] {Path, $X$};
            \draw[black, ->] (6, 0) -- (6, 3.03);
            \draw[black, ->] (6, 0) -- (6 + 5.03, 0);
            \node at (6 + 5.3, 0) {\small $X^1$};
            \node at (6, 3.3) {\small $X^2$};
            
            \draw[->, line width=0.2mm] (11.2, 1.5) -- (11.9, 1.5);
        
            \pgfmathsetmacro{\shift}{0}
            \pgfmathsetmacro{\xend}{14}
            \draw[fill=lightgray, lightgray] (12 + \shift, 0) rectangle (\xend, 3);
            \filldraw[colorls1] (12.3 + \shift, 2.75) circle (4pt);
            \filldraw[colorls2] (12.3 + \shift, 2.35) circle (4pt);
            \filldraw[colorls3] (12.3 + \shift, 1.95) circle (4pt);
            \node at (12.4 + \shift, 2.8) [right] {\tiny $= \Delta X^1$};
            \node at (12.4 + \shift, 2.4) [right] {\tiny $= \Delta X^2$};
            \node at (12.4 + \shift, 1.93) [right] {\tiny $= A_{+} - A_{-}$};
            \node[black, rotate=90] at (12.3 + \shift, 0.95) {\Large \ldots\ldots};
            \node at (12 + 1 + 0.5*\shift, 3) [above=2mm] {Log-signature};
            \draw [decorate, decoration={calligraphic brace, amplitude=1.5pt, mirror}] (\xend + 0.1, 2.3) -- node[midway, right, xshift=-0.5pt] {\tiny Depth 1} (\xend + 0.1, 2.9);
            \draw [decorate, decoration={calligraphic brace, amplitude=1.5pt, mirror}] (\xend + 0.1, 1.75) -- node[midway, right, xshift=-0.5pt] {\tiny Depth 2} (\xend + 0.1, 2.15);
            \draw [decorate, decoration={calligraphic brace, amplitude=1.5pt, mirror}] (\xend + 0.1, 0.2) -- node[midway, right, xshift=-0.5pt] {\tiny Higher order} (\xend + 0.1, 1.6);
        
        \end{tikzpicture}
    }
    \caption{Geometric intuition for the first two levels of the log-signature for a 2-dimensional path. The depth 1 terms correspond to the change in each of the coordinates over the interval. The depth 2 term corresponds to the \textit{L\'evy area} of the path, this being the signed area between the curve and the chord joining its start and endpoints.}
    \label{fig:geometric_signature}
\end{figure*}
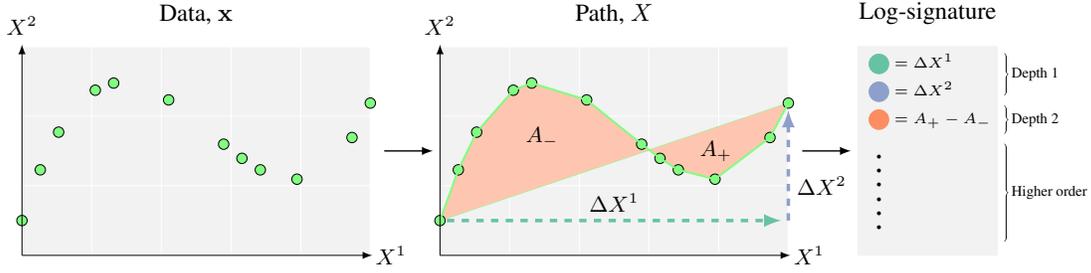

 

\subsection{Contributions}
Neural CDEs, as with RNNs, begin to break down for long time series. Loss/accuracy worsens, and training time becomes prohibitive due to the sheer number of forward operations within each training epoch.

Meanwhile, and at first glance tangentially, it is known in the field of rough path theory \cite{lyons1998differential, lyons2004differential, friz2010multidimensional} that it is possible to numerically solve CDEs not by pointwise evaluations of the control path (as in the existing Neural CDE approach), but by using a specific summarisation -- known as \emph{the log-signature} -- of the control path over short time intervals. See Figure \ref{fig:ncde_nrde_plot}. A CDE treated in this way is termed a \textit{rough differential equation}, and the numerical method is termed \textit{the log-ODE method}.

The central contribution of this paper is to observe that this latter technique actually offers a way to solve the former problem. The log-ODE method offers a way to update the hidden state of a Neural CDE over large intervals -- much larger than would be expected given the sampling rate or length of the data. This dramatically reduces the effective length of the time series. Log-signatures represents a CDE-specific choice of summarisation, which works because closely-spaced samples are often strongly correlated. Additionally, this approach no longer requires differentiability of the control path.

In line with the usual mathematical terminology, we refer to our approach as \textit{neural rough differential equations} (Neural RDEs). Moreover, Neural RDEs are still able to exploit memory-efficient continuous-time adjoint backpropagation. This is of additional benefit as memory pressure becomes increasingly relevant for long time series -- indeed many of our experiments could not have been ran without it.

With Neural RDEs, we demonstrate improvements experimentally on real-world problems of length up to 17 000. We report substantial improvements in model performance (by as much as 17\% on some classification tasks, reflecting the difficulty inherent in long time series), speed (by roughly a factor of 10), and memory usage (by roughly a factor of 100 compared to models not using the adjoint method).

\section{Theory}\label{sec:theory}
We begin with an exposition on the motivating theory. Our description here will focus on the high-level intuitions. For a full technical description we refer to the appendices; see also section 7.1 of \cite{friz2010multidimensional}.

Readers primarily interested in practical applications should feel free to skip to section \ref{sec:method}.

\subsection{Signatures and Log-signatures} \label{subsec:signatures}

The signature transform is a map from paths to a vector of real values, specifying a collection of statistics about the path. It is a central component of the theory of controlled differential equations since these statistics describe how the data interacts with dynamical systems. The log-signature is then formed by representing the same information in a compressed format.

\paragraph{Signature transform} Let $X = (X^1, ..., X^d): [0, T] \rightarrow \mathbb{R}^d$ be continuous and piecewise differentiable.\footnote{For our purposes later it will typically be a linear interpolation of a time series.} Letting\footnote{This is a slightly simplified definition, and the signature is often instead defined using the notation of stochastic calculus; for completeness see Definition \ref{def:signature_appendix}.}
\begin{equation}
    S^{i_1,...i_k}_{a, b}(X) = \underset{\scriptscriptstyle a < t_1 < ... < t_k < b}{\int ... \int} \prod^k_{j=1} \frac{\dby X^{i_j}}{\dby t} (t_j) \dby t_j,
    \label{eq:signature}
\end{equation}
then the depth-N signature transform of $X$ is given by
\begin{multline}
    \mathrm{Sig}^N_{a, b}(X) = \Big(\big\{S^i_{a, b}(X)^{(i)}\big\}_{i = 1}^{d}, \big\{S^{i, j}_{a, b}(X)\big\}_{i, j = 1}^{d}, \\ 
    \ldots, \big\{S^{i_1,\ldots, i_N}_{a,b}(X)\big\}_{i_1, \ldots, i_N = 1}^{d}\Big).
    \label{eq:truncated_path_signature}
\end{multline}

This definition is independent of the choice of $T$ and $t_i$, by change of variables in equation (\ref{eq:signature}).

We see that the signature is a collection of integrals, with each integral defining a real value. It is a graded sequence of statistics that characterise the input time series. In particular, \citep{hambly2010sigunique} show that under mild conditions, $\mathrm{Sig}^\infty(X)$ completely determines $X$ up to translation, provided time is included as a channel in $X$.

\paragraph{Log-signature transform} The signature transform has some redundancy: a little algebra shows that for example $S^{1, 2}_{a, b}(X) + S^{2, 1}_{a, b}(X) = S^1_{a, b}(X) S^2_{a, b}(X)$, so that we already know $S^{2, 1}_{a, b}(X)$ provided we know the other three quantities.

The \emph{log-signature transform} is then essentially obtained by computing the signature transform, and throwing out redundant terms, to obtain some (nonunique) minimal collection. 





Starting from the depth-N signature transform and removing some fixed set of redundancies produces the \emph{depth-N log-signature transform}. We fix some set of redundancies throughout (essentially corresponding to a choice of basis), and denote this $\logsig^N_{a, b}$. This is a map from Lipschitz continuous paths $[a, b] \to \reals^v$ into $\reals^{\beta(v, N)}$, where $\beta(v, N)$ denotes the dimension of the log-signature (see Appendix \ref{eq:logsig-dim}).

\paragraph{Geometric intuition} In figure \ref{fig:geometric_signature} we provide a geometric intuition for the first two levels of the log-signature, which have natural geometric interpretations.

The depth 1 terms correspond to the changes in each channel over the interval; this is $\Delta X_1, \Delta X_2$ in the figure. The depth 2 term corresponds to the signed area in between the chord joining the endpoints and the path itself; this corresponds to $A_{+} - A_{-}$ in the figure. Higher order terms correspond to higher order integrals and iterated areas in higher dimensional spaces, and become a little more difficult to visualise.

\begin{figure*}[t]
    \centering
    \vspace{-0.5em}
    \begin{align*}
    Z_t &\approx Z_a + \int^{t}_{a} \Big(f(Z_a) + D_f(Z_a)(Z_s - Z_a)\Big) \frac{\dby X}{\dby t}(s)\m\dby s\nonumber\\[2pt]
    &= Z_a + \int^{t}_{a} f(Z_a)\frac{\dby X}{\dby t}(s)\,\dby s\nonumber
    + \int^{t}_{a} \bigg(D_f(Z_a)\int^{s}_{a} f(Z_u)\, \frac{\dby X}{\dby t}(u)\,\dby u\bigg)\frac{\dby X}{\dby t}(s)\,\dby s\nonumber\\[2pt]
    &\approx Z_a + f(Z_a)\int^{t}_{a} \frac{\dby X}{\dby t}(s)\,\dby s\nonumber
     + D_f(Z_a)f(Z_a)\int^{t}_{a}\int^{s}_{a} \frac{\dby X}{\dby t}(u)\,\dby u\frac{\dby X}{\dby t}(s)\m\dby s\nonumber\\[2pt]
    &= Z_a + f(Z_a)\big\{S(X)^{(i)}\}_{i = 1}^{d}\nonumber
    + D_f(Z_a)f(Z_a)\big\{S(X)^{(i, j)}\big\}_{i, j = 1}^{d}.
    \end{align*}
    \vspace{-1.5em}
    \caption{Signature (Taylor) expansion of a CDE. The action of the vector field f on the depth-N signature is a matrix-vector product and is fully described, for any N, in \cite{logode2014estimate}.}\label{eq:simple_taylor}
\end{figure*}

\paragraph{(Log-)Signatures and CDEs} In Figure \ref{eq:simple_taylor} we give the equations for how log-signatures arise in the solution of CDEs. Begin by letting $D_f$ denote the Jacobian of a function $f$. Now expand equation (\ref{eq:kidger_cde}) by linearising the vector field $f$ and neglecting higher order terms.

This is simply the Taylor Expansion of the CDE. The Taylor coefficients are precisely these signature terms, thus demonstrating how signatures are intrinsically linked to the solutions of CDEs. Higher order Taylor expansions results in corrections using higher order signature terms.



\subsection{The Log-ODE Method}
Recall for $X \colon [a, b] \to \reals^v$ that $\logsig^N_{a, b}(X) \in \reals^{\beta(v, N)}$. The log-ODE method states that $Z_b \approx \widehat{Z}_b$ where
\begin{equation}
    \widehat{Z}_u = \widehat{Z}_a + \int^{u}_{a} \widehat{f}(\widehat{Z}_s) \frac{\logsig^N_{a, b}(X)}{b-a}\dby s \text{ for } u \in (a, b],
    \label{eq:log-ode}
\end{equation}
and $\widehat{Z}_{a} = Z_{a}$. Here $Z$ is the same as in equation (\ref{eq:kidger_ncde}), and the relationship between $\widehat{f}$ to $f$ is given in Appendix \ref{apx:logode}.

That is, the solution of the CDE may be approximated by the solution to an ODE. This is typically applied locally: pick some points $r_i$ such that $a = r_0 < r_1 < \cdots < r_m = b$, split up the CDE of equation (\ref{eq:kidger_cde}) into an integral over $[r_0, r_1]$, an integral over $[r_1, r_2]$, and so on, and apply the log-ODE method to each interval separately. A CDE treated in this way is, for the purposes of this exposition, termed a \textit{rough differential equation}.

See Appendix \ref{apx:logode} for the precise details and Appendix \ref{apx:logodeconv} for a proof of convergence. For the reader familiar with the Magnus expansion for linear differential equations \citep{magnus2008expansion}, then the log-ODE method is a generalisation.



\begin{figure*}[!hbtp]
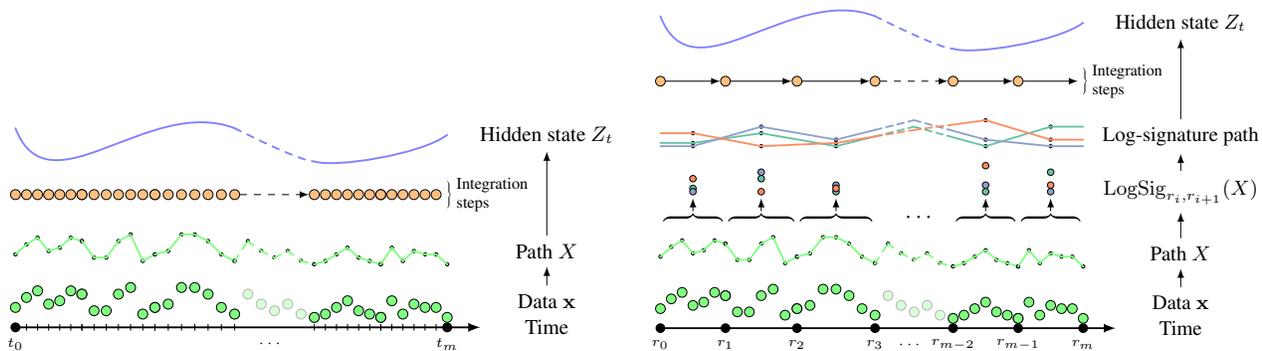

    \begin{subfigure}[t]{.5\linewidth}
        \centering
            \resizebox{\linewidth}{!}{
                \ncdediagram{0}
            }
    \end{subfigure}%
    \begin{subfigure}[t]{.5\linewidth}
        \centering
        \resizebox{\linewidth}{!}{
            \ncdediagram{1}
        }
    \end{subfigure}
    \caption{An overview of the log-ODE method applied to Neural RDEs. \textbf{Left:} A single step (CDE or RDE) model. The path $X$ is quickly varying, meaning a lot of integration steps are needed to resolve it. \textbf{Right:} The Neural RDE utilising the log-ODE method with integration steps larger than the discretisation of the data. The path of log-signatures is more slowly varying (in a higher dimensional space), and needs fewer integration steps to resolve.}
    \label{fig:ncde_plots}
\end{figure*}

\section{Method}\label{sec:method}
We move on to introducing the neural rough differential equation.

Recall that we observe some time series $\mathbf{x} = ((t_0, x_0), (t_1, x_1), ..., (t_n, x_n))$, and have constructed a piecewise linear interpolation $X \colon [t_0, t_n] \to \mathbb{R}^{v}$ such that $X_{t_i} = (t_i, x_i)$.

We now pick points $r_i$ such that $t_0 = r_0 < r_1 < \cdots < r_m = t_n$. In principle these can be variably spaced but in practice we will typically space them equally far apart. The total number of points $m$ should be much smaller than $n$. The choice and spacing of $r_i$ will be a hyperparameter.

We also pick a depth hyperparameter $N \geq 1$. In section \ref{sec:theory} we introduced the depth-$N$ log-signature transform. For $X \colon [t_0, t_n] \to \reals^v$ and $t_0 \leq r_i < r_{i + 1} \leq t_n$ the log-signature of $X$ over the interval $[r_i, r_{i+1}]$ was defined to be a particular collection of statistics $\logsig^N_{r_i, r_{i+1}}(X) \in \reals^{\beta(v, N)}$; specifically those statistics that best describe how $X$ drives the CDE equation (\ref{eq:kidger_cde}).

\subsection{The Rough Hidden State Update}

Recall how the Neural CDE formulation of equation (\ref{eq:kidger_ncde}) was solved via equations (\ref{eq:kidger_cde_evaluation}), (\ref{eq:kidger_g_X}). For the rough approach we begin by replacing (\ref{eq:kidger_g_X}) with the piecewise
\begin{equation}
    \widehat{g}_{\theta, X}(Z, s) = \widehat{f}_\theta(Z) \frac{\logsig_{r_i, r_{i+1}}^N(X)}{r_{i+1} - r_i} \quad \text{for } s \in [r_i, r_{i + 1}),
    \label{eq:log_ode_g}
\end{equation}
where $\widehat{f}_\theta \colon \reals^w \to \reals^{w \times \beta(v, N)}$ is an arbitrary neural network, and the right hand side denotes a matrix-vector product between $\widehat{f}_\theta$ and the log-signature. Equation (\ref{eq:kidger_cde_evaluation}) then becomes
\begin{equation}
    Z_t = Z_{t_0} + \int^{t}_{t_0} \widehat{g}_{\theta, X}(Z_s, s) \dby s.
    \label{eq:log_ode_ncde}
\end{equation}
This may now be solved as a (neural) ODE using standard ODE solvers.

We give an overview of this process in figure \ref{fig:ncde_plots}. The left hand side represents a single step method, as in the existing approach to Neural CDEs. The right hand side depicts a rough approach that takes steps larger than the discretisation of the data in exchange for additional terms of the log-signature.

\subsection{Neural RDEs Generalise Neural CDEs}\label{section:generalise}
Suppose we happened to choose $r_i = t_i$ and $r_{i + 1} = t_{i + 1}$. Then the log-signature term is 
\begin{equation*}
    \frac{\logsig_{t_i, t_{i+1}}^N(X)}{t_{i+1} - t_i}
\end{equation*}
Recall that the depth 1 log-signature is just the increment of the path over the interval. So this becomes
\begin{equation*}
    \frac{\Delta X_{[t_i, t_{i+1}]}}{t_{i+1} - t_i} = \frac{\dby X^{\mathrm{linear}}}{\dby t}(s) \quad \text{for } s \in [t_i, t_{i + 1}),
\end{equation*}
that is to say the same as obtained via the original method if using linear interpolation. In this way the Neural RDE approach generalises the existing Neural CDE approach.
    
\subsection{Discussion}\label{subsec:tradeoff}

\paragraph{Length/Channel Trade-Off} 
The sequence of log-signatures is now of length $m$, which was chosen to be much smaller than $n$. As such, it is much more slowly varying over the interval $[t_0, t_n]$ than the original data, which was of length $n$. The differential equation it drives is better behaved, and so larger integration steps may be used in the numerical solver. This is the source of the speed-ups of this method; we observe typical speed-ups by a factor of about 10.


\paragraph{Memory Efficiency} Long sequences need large amounts of memory to perform backpropagation-through-time (BPTT). As with the original Neural CDEs, the log-ODE approach supports memory-efficient backpropagation via the adjoint equations. If the vector field $f_\theta$ requires $\mathcal{O}(H)$ memory, and the time series is of total length $T$, then backpropagating through the solver requires $\mathcal{O}(HT)$ memory whilst the adjoint method requires only $\mathcal{O}(H + T)$; see \citet{kidger2020neural}.

\paragraph{The Log-signature as a Preprocessing Step} When training a model in practice, the log-signatures need only be computed once and thus the computation can be performed as part of data preprocessing. Log-signatures can also be easily computed in an online fashion, making the model suitable for such problems. 

\paragraph{Structure of $\widehat{f}$} The description here aligns with the log-ODE scheme described in equation (\ref{eq:log-ode}). There is one discrepancy: we do not attempt to model the specific structure of $\widehat{f}$. This is in principle possible, but is computationally expensive. Instead, we model $\widehat{f}$ as a neural network directly. This need \emph{not} necessarily exhibit the requisite structure, but as neural networks are universal approximators \citep{pinkus, deepandnarrow} then this approach is at least as general from a modelling perspective.

\paragraph{Ease of Implementation}
This method is straightforward to implement using pre-existing tools.

There are standard libraries available for computing the log-signature transform: we use Signatory \citep{signatory}. As equation (\ref{eq:log_ode_ncde}) is an ODE, it may be solved directly using tools such as \texttt{torchdiffeq} \citep{torchdiffeq}.

As an alternative, we note that the form of equation (\ref{eq:log_ode_g}) is that of equation (\ref{eq:kidger_g_X}), with the driving path taken to be piecewise linear in log-signature space. Computation of the log-signatures can therefore be considered as a preprocessing step, producing a sequence of log-signatures. From this we may construct a path in log-signature space, and apply existing tools for neural CDEs. (Rather than tools for neural ODEs.) This idea is summarised in figure \ref{fig:ncde_plots}. We make this approach available in the [redacted] open source project.



\paragraph{Applications} In principle, a Neural RDE may be applied to solve any Neural CDE problem. However, we typically observe limited benefit on relatively short time series: the original Neural CDE formulation works well enough, and there is little room to see either speed or loss/accuracy improvements via this approach.

The situation changes for long time series. Here, the existing approach struggles as the length of the time series grows. Performance worsens, and speed drops due to the sheer number of forward evaluations. This is the same behaviour as for RNNs.

Now, the reduction in length (from $n$ to $m \ll n$) is highly beneficial. Moreover, the compression performed by the log-signature is also of benefit: closely-sampled points will be typically be strongly correlated, and there is little to be gained by treating them all individually.

In addition, there are two advantages shared by both Neural CDEs and Neural RDEs, that make them suitable for long time series. The first is the sharply reduced memory requirements of the adjoint method. For example (chosen arbitrarily without cherry-picking) in one experiment we see a reduction in memory usage from $3.6$GB to just $47$MB.

The second is that as both operate in continuous time, the steps in the numerical solver may be decoupled from the sampling rate of the data: steps are taken with respect to the complexity of the data, not just its sampling rate. In particular a slowly-varying but densely-sampled path would still be fast without requiring many integration steps.

\paragraph{The Depth and Step Hyperparameters}To solve a Neural RDE accurately via the log-ODE method, we should be prepared to take the depth $N$ suitably large, or the intervals $r_{i + 1} - r_i$ suitably small. Accomplishing this would often require that they are taken relatively large or relatively small, respectively. Instead, we treat these as hyperparameters. This makes use of the log-ODE method a modelling choice rather than an implementation detail.

Increasing step size will lead to faster (but less informative) training by reducing the number of operations in the forward pass. Increasing depth will lead to slower (but more informative) training, as more information about each local interval is used in each update.

\section{Experiments} \label{sec:experiments}
We run experiments applying Neural RDEs to four real-world datasets. Every problem was chosen for its long length. The lengths are sufficiently long that adjoint-based backpropagation \citep{neural2018ode} was often needed simply to avoid running out of memory at any reasonable batch size. Every problem is regularly sampled, so we take $t_i = i$.

Recall that the Neural RDE approach features two hyperparameters, corresponding to log-signature depth and step size. Good choices will turn out to have a dramatic positive effect on performance. Accordingly for every experiment we run Neural RDEs for all depths in $N=2,3$ and all step sizes in $2,4,8,16,32,64,128,256,512,1024$. Depth 1 and step 1 are not considered as both reduce onto the Neural CDE model, as discussed in section \ref{section:generalise}. In practice, when choosing a final model, one would choose that with depth and step values that minimise the validation loss, as in any hyperparamter value selection.

We compare against two baseline models. The first is a Neural CDE; as the model we are extending then comparisons to this are our primary concern. For context we also additionally include a baseline against the ODE-RNN introduced in \citet{rubanova2019latent}. For both of these models, we also run experiments on the full range of step sizes described above.

For the Neural CDE model, increased step sizes correspond to na{\"i}ve subsampling of the data (in accordance with section \ref{section:generalise}). For the ODE-RNN model, we instead fold the time dimension into the feature dimension, so that at each step the ODE-RNN model sees several adjacent time points. This represents an alternate technique for dealing with long time series, so as to provide a reasonable benchmark.



For each model, and each hyperparameter combination, we run the experiment three times and report the mean and standard deviation of the test metrics. We additionally report mean training times and memory usages. 


Precise details of hyperparameter selection, optimisers, normalisation, and so on can be found in Appendix \ref{apx:experiments}. For brevity, we provide results for only some of the step sizes here. The full results are described in Appendix \ref{apx:results}.

\subsection{Classifying EigenWorms}
\begin{table}[t]
    \small
    \setlength{\tabcolsep}{4.pt}
    \begin{center}
        \begin{tabular}{ccccc}
        \toprule
        \textbf{Model} & \textbf{Step} & \textbf{Accuracy (\%)} & \textbf{Time (Hrs)} & \textbf{Mem (Mb)} \\
        \midrule
          & 1    &  -- & -- & -- \\
        ODE-RNN  & 4    &   35.0 $\pm$ 1.5 &           0.8 &        3629.3 \\
        (folded) & 32   &   32.5 $\pm$ 1.5 &           0.1 &         532.2 \\
          & 128  &   47.9 $\pm$ 5.3 &           0.0 &         200.8 \\
          
        \hdashline\noalign{\vskip 0.5ex}
          & 1    &  62.4 $\pm$ 12.1 &          22.0 &         176.5 \\
        \multirow{2}{*}{NCDE}    & 4    &  66.7 $\pm$ 11.8 &           5.5 &          46.6 \\
          & 32   &  64.1 $\pm$ 14.3 &           0.5 &           8.0 \\
          & 128  &   48.7 $\pm$ 2.6 &           0.1 &           3.9 \\
          
        \midrule
        \multirow{3}{*}{\begin{tabular}{c} NRDE\\(depth 2)\end{tabular}}  & 4    &   \textbf{   83.8 $\pm$ 3.0$^*$} &           2.4 &         180.0 \\
        & 32   &  67.5 $\pm$ 12.1 &           0.7 &          28.1 \\
          & 128  &   \textbf{76.1 $\pm$ 5.9} &           0.2 &           7.8 \\
          
        \hdashline\noalign{\vskip 0.5ex}
        \multirow{3}{*}{\begin{tabular}{c} NRDE\\(depth 3)\end{tabular}}  & 4    &   76.9 $\pm$ 9.2 &           2.8 &         856.8 \\
         & 32   &   \textbf{75.2 $\pm$ 3.0} &           0.6 &         134.7 \\
          & 128  &   68.4 $\pm$ 8.2 &           0.1 &          53.3 \\
        \bottomrule
        \end{tabular}
    \end{center}
    \caption{EigenWorms dataset: mean $\pm$ standard deviation of test set accuracy measured over three repeats. Also reported are the mean memory usage and training time. For all models a variety of step sizes are considered. For the Neural RDE we additionally investigate varying depths. (Recalling that the NCDE is a depth-1 NRDE.) `--' denotes that the model could not be run within GPU memory. Bold denotes the best model score for a given step size, and $^*$ denotes that the score was the best achieved over all models and step sizes.}
    \label{tab:eigenworms}
\end{table}

Our first example uses the EigenWorms dataset from the UEA archive from \citet{bagnall16bakeoff}. This consists of time series of length 17\,984 and 6 channels (including time), corresponding to the movement of a roundworm. The goal is to classify each worm as either wild-type or one of four mutant-type classes.


Results are shown in Table \ref{tab:eigenworms}. We begin by seeing that the step-1 Neural CDE model takes roughly a day to train. Switching to Neural RDEs speeds this up by an order of magnitude, to roughly two hours. Moreover doing so dramatically improves accuracy, by up to $17\%$, reflecting the classical difficulty of learning from long time series.

Meanwhile na{\"i}ve subsampling approaches for the Neural CDE method only achieve speed-ups without performance improvements. The folded ODE-RNN model performs poorly, attaining the worst score for any step size whilst imposing a significantly higher memory burden. 



Results across all step sizes may be found in Appendix \ref{apx:results}.

\subsection{Estimating Vitals Signs from PPG and ECG data}
\begin{table*}[t]
    \small
    \begin{center}
        \begin{tabular}{cccccccccc}
        \toprule
        \multirow{2}{*}{\textbf{Model}} &
        &
        \multirow{2}{*}{\textbf{Step}} & \multicolumn{3}{c}{$\mathbf{L^2}$} & \multicolumn{3}{c}{\textbf{Time (Hrs)}} & \multirow{2}{*}{\textbf{Memory (Mb)}} \\
        \cmidrule(lr){4-6} \cmidrule(lr){7-9}
        & & & RR & HR & SpO$_2$ & RR & HR & SpO$_2$ & \\
         
        \midrule
        
         & & 1   &  -- & 13.06  $\pm$  0.0 & -- & -- & 10.5 & -- & 3653.0 \\
        ODE-RNN (folded) & \multirow{2}{*}{} & 8   & 2.47 $\pm$ 0.35 & 13.06 $\pm$ 0.00 & 3.3 $\pm$ 0.00 & 1.5 & 1.2 & 0.9 & 917.2 \\
         & & 128  &  1.62 $\pm$ 0.07 &  13.06 $\pm$ 0.00 &    3.3 $\pm$ 0.00 &           0.2 &           0.1 &           0.1 &               81.9 \\
         & & 512  &  1.66 $\pm$ 0.06 &   6.75 $\pm$ 0.9 &  1.98 $\pm$ 0.31 &           0.0 &           0.1 &           0.1 &               40.4 \\
         
        \hdashline\noalign{\vskip 0.5ex}
        
        & & 1   &  2.79 $\pm$ 0.04 &   9.82 $\pm$ 0.34 &  2.83 $\pm$ 0.27 &          23.8 &          22.1 &          28.1 &               56.5 \\
        \multirow{2}{*}{NCDE} & \multirow{2}{*}{} & 8   &   2.80 $\pm$ 0.06 &  10.72 $\pm$ 0.24 &  3.43 $\pm$ 0.17 &           3.0 &           2.6 &           4.8 &               14.3 \\
        &  & 128 &  2.64 $\pm$ 0.18 &  11.98 $\pm$ 0.37 &  2.86 $\pm$ 0.04 &           0.2 &           0.2 &           0.3 &                8.7 \\
        &  & 512 &  2.53 $\pm$ 0.03 &  12.22 $\pm$ 0.11 &  2.98 $\pm$ 0.04 &           0.1 &           0.0 &           0.1 &                8.4 \\
          
        \midrule
        & & 8   &  2.63 $\pm$ 0.12 &   8.63 $\pm$ 0.24 &  2.88 $\pm$ 0.15 &           2.1 &           3.4 &           3.3 &               21.8 \\
        NRDE (depth 2) &  & 128  &  1.86 $\pm$ 0.03 &   6.77 $\pm$ 0.42 &  1.95 $\pm$ 0.18 &           0.3 &           0.4 &           0.7 &               10.9 \\
        &  & 512 &  1.81 $\pm$ 0.02 &   5.05 $\pm$ 0.23 &  2.17 $\pm$ 0.18 &           0.1 &           0.2 &           0.4 &               10.3 \\
        \hdashline\noalign{\vskip 0.5ex}
        & & 8   &  \textbf{2.42 $\pm$ 0.19} &    \textbf{7.67 $\pm$ 0.40} &  \textbf{2.55 $\pm$ 0.13} &           2.9 &           3.2 &           3.1 &               43.3 \\
        NRDE (depth 3) &   & 128  &  \textbf{1.51 $\pm$ 0.08} &   \textbf{$\;$2.97 $\pm$ 0.45}$^*$ &  \textbf{1.37 $\pm$ 0.22} &           0.5 &           1.7 &           1.7 &               17.3 \\
        &  & 512  &  \textbf{$\;$1.49 $\pm$ 0.08}$^*$ &   \textbf{3.46 $\pm$ 0.13} &  \textbf{$\;$1.29 $\pm$ 0.15}$^*$ &           0.3 &           0.4 &           0.4 &               15.4 \\
        \bottomrule
        \end{tabular}
    \end{center}
    \caption{The three experiments on BIDMC datasets: mean $\pm$ standard deviation of test set $L^2$ loss, measured over three repeats, over each of three different vital signs prediction tasks (RR, HR, SpO$_2$). Also reported are the memory usage and training time. Only mean times are shown for space. For all models a variety of step sizes are considered. For the Neural RDE we additionally investigate varying depths. (Recalling that the NCDE is a depth-1 NRDE.) `--' denotes that the model could not be run within GPU memory. Bold denotes the best model score for a given step size, and $^*$ denotes that the score was the best achieved over all models and step sizes.}
    \label{tab:bidmc}
\end{table*}
\begin{figure*}
    \centering
    \includegraphics[width=\textwidth]{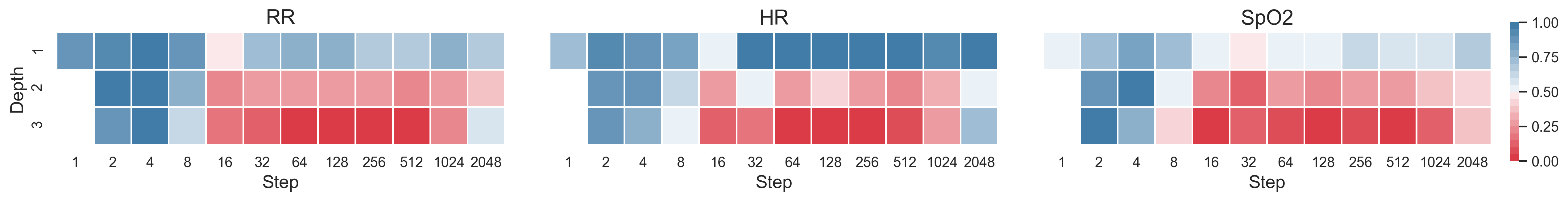}
    \caption{Heatmap depicting normalised losses on the three BIDMC datasets for differing step sizes and depths. We can see that the point of lowest MSE (deepest red) has step $>1$ and depth $>1$, and that performance worsens for very long steps. This represents the depth/step tradeoff for long length time series. }
    \label{fig:bidmc_heat}
\end{figure*}

Next we consider three separate problems, using data from the TSR archive \citep{MonashTSRegressionArchive}, coming originally from the Beth Israel Deaconess Medical Centre (BIDMC).

We aim to predict a person's respiratory rate (RR), their heart rate (HR), or their oxygen saturation (SpO2) at the end of the sample, having observed PPG and ECG data over the length of the sample. The data is sampled at 125Hz and each series has length 4\,000. There are 3 channels (including time). We evaluate performance with the $L^2$ loss.

The results are shown in table \ref{tab:bidmc}.

We find that the depth $3$ Neural RDE is the top performer for every task at every step size, reducing test loss by $30$--$59\%$ versus the Neural CDE. Moreover, it does so with roughly an order of magnitude less training time.

We attribute the improved test loss to the Neural RDE model being better able to learn long-term dependencies due to the reduced sequence length. Note that the performance of the rough models actually improves as the step size is increased. This is in contrast to Neural CDE, which sees a degradation in performance.

The ODE-RNN model, besides using significantly more memory, struggles to train effectively when the sequence length is long. Training improves as the sequence size is shortened, but still produces results substantially worse than those achieved by the Neural RDE.

As a visual summary of these results, including the full range of step sizes, we also provide heatmaps in Figure \ref{fig:bidmc_heat}.

The full results across the full range of step sizes may be found in Appendix \ref{apx:results}.

\section{Limitations}

\paragraph{Number of hyperparameters} Two new hyperparameters -- truncation depth and step size -- with substantial effects on training time and memory usage must now also be tuned.

\paragraph{Number of input channels} The log-ODE method is most feasible with few input channels, as the number of log-signature channels $\beta(v, N)$ grows exponentially in $v$. For larger $v$ then the available parallelism may become saturated.

\section{Related Work}
CNNs and Transformers have been shown to offer improvements over RNNs for modelling long-term dependencies \citep{bai2018empirical, li2019enhancing}, although the latter in particular have typically focused on language modelling. On a more practical note, Transformers are famously $\mathcal{O}(L^2)$ in the length of the time series $L$. Several approaches have been introduced to reduce this, for example \citet{li2019enhancing} reduce this to $\mathcal{O}(L(\log L)^2)$. Extensions specifically to long sequences do exist \citep{sourkov2018igloo}, but again these typically focus on language modelling rather than multivariate time series data. 

There has also been some work on long time series for classic RNN (GRU/LSTM) models.

\citet{wisdom2016full, jing2019gated} show that unitary or orthogonal RNNs can mitigate the vanishing/exploding gradients problem. However, they are expensive to train due to the need to compute a matrix inversion at each training step. \citet{chang2017dilated} introduce dilated RNNs with skip connections between RNN states, which help improve training speed and learning of long-term dependencies. \citet{campos2017skip} introduce the `Skip-RNN' model, which extend the RNN by adding an additional learnt component that skips state updates. \citet{li2018independently} introduce the `IndRNN' model, with particular structure tailored to learning long time series.

One meaningful comparison is to hierarchical subsampling as in \citet{graves2012supervised, de2015survey}. There the data is split into windows, an RNN is run over each window, and then an additional RNN is run over the first RNN's outputs; we may describe this as an RNN/RNN pair. \citet{liao2019learning} then perform the equivalent operation with a log-signature/RNN pair. In this context, our use of log-ODE method is analogous to an log-signature/NCDE pair.

In comparison to \citet{liao2019learning}, this means moving from an inspired choice of pre-processing to an actual implementation of the log-ODE method. In doing so the differential equation structure is preserved. Moreover this takes advantage of the synergy between log-signatures (which extract statistics on how data drives differential equations), and the controlled differential equation it then drives. Broadly speaking these connections are natural: at least within the signature/CDE/rough path community, it is a well-known but poorly-published fact that RNNs, (log-)signatures, and (Neural) CDEs are all related; see for example \citet{kidger2020neural} for a little exposition on this.

\citet{de2019gru, lechner2020learning} amongst others consider continuous time modifications to GRUs and LSTMs, improving the learning of long-term dependencies.

\citet{lmu, hippo} consider links with ODEs and approximation theory, with the goal of improving the long-term memory capacity of RNNs. Given the differential equation structure both they and we consider, a hybridisation of these techniques seems like a promising line of future inquiry.

\section{Conclusion}
We have introduced \textit{neural rough differential equations} as an approach to continuous-time time series modelling. These extend Neural CDEs, driving the hidden state not by point evaluations but by interval summarisations of the underlying time series or control path. Neural RDEs may still be solved via ODE methods, and thus retain both adjoint backpropagation and continuous dynamics. As they additionally reduce the effective length of the control path, we observe substantial practical benefits in applying Neural RDEs to long time series. In this regime we report significant training speed-ups, model performance improvements, and reduced memory requirements, on problems of length up to 17\,000.

\section*{Acknowledgements} JM was supported by the EPSRC grant EP/L015803/1 in collaboration with Iterex Therapuetics. PK was supported by the EPSRC grant EP/L015811/1. CS was supported by the EPSRC grant EP/R513295/1. JF was supported by the EPSRC grant EP/N509711/1. JM, CS, PK, JF were supported by the Alan Turing Institute under the EPSRC grant EP/N510129/1.

\bibliography{references}

\begin{thebibliography}{37}
\providecommand{\natexlab}[1]{#1}
\providecommand{\url}[1]{\texttt{#1}}
\expandafter\ifx\csname urlstyle\endcsname\relax
  \providecommand{\doi}[1]{doi: #1}\else
  \providecommand{\doi}{doi: \begingroup \urlstyle{rm}\Url}\fi

\bibitem[Bagnall et~al.(2017)Bagnall, Lines, Bostrom, Large, and
  Keogh]{bagnall16bakeoff}
Bagnall, A., Lines, J., Bostrom, A., Large, J., and Keogh, E.
\newblock The great time series classification bake off: a review and
  experimental evaluation of recent algorithmic advances.
\newblock \emph{Data Mining and Knowledge Discovery}, 31:\penalty0 606--660,
  2017.

\bibitem[Bai et~al.(2018)Bai, Kolter, and Koltun]{bai2018empirical}
Bai, S., Kolter, J.~Z., and Koltun, V.
\newblock An empirical evaluation of generic convolutional and recurrent
  networks for sequence modeling.
\newblock \emph{arXiv preprint arXiv:1803.01271}, 2018.

\bibitem[Blanes et~al.(2009)Blanes, Casas, Oteo, and Ros]{magnus2008expansion}
Blanes, S., Casas, F., Oteo, J.-A., and Ros, J.
\newblock {The Magnus expansion and some of its applications}.
\newblock \emph{Physics Reports}, 470\penalty0 (5-6):\penalty0 151--238, 2009.

\bibitem[Boutaib et~al.(2014)Boutaib, Gyurk\'{o}, Lyons, and
  Yang]{logode2014estimate}
Boutaib, Y., Gyurk\'{o}, L.~G., Lyons, T., and Yang, D.
\newblock {Dimension-free Euler estimates of rough differential equations}.
\newblock \emph{Revue Roumaine de Mathmatiques Pures et Appliques}, 59, 2014.

\bibitem[Campos et~al.(2017)Campos, Jou, Gir{\'o}-i Nieto, Torres, and
  Chang]{campos2017skip}
Campos, V., Jou, B., Gir{\'o}-i Nieto, X., Torres, J., and Chang, S.-F.
\newblock {Skip RNN: Learning to Skip State Updates in Recurrent Neural
  Networks}.
\newblock \emph{arXiv preprint arXiv:1708.06834}, 2017.

\bibitem[Chang et~al.(2017)Chang, Zhang, Han, Yu, Guo, Tan, Cui, Witbrock,
  Hasegawa-Johnson, and Huang]{chang2017dilated}
Chang, S., Zhang, Y., Han, W., Yu, M., Guo, X., Tan, W., Cui, X., Witbrock, M.,
  Hasegawa-Johnson, M.~A., and Huang, T.~S.
\newblock Dilated recurrent neural networks.
\newblock In \emph{Advances in Neural Information Processing Systems}, pp.\
  77--87, 2017.

\bibitem[Chen(2018)]{torchdiffeq}
Chen, R. T.~Q.
\newblock \texttt{torchdiffeq}, 2018.
\newblock \url{https://github.com/rtqichen/torchdiffeq}.

\bibitem[Chen et~al.(2018)Chen, Rubanova, Bettencourt, and
  Duvenaud]{neural2018ode}
Chen, R. T.~Q., Rubanova, Y., Bettencourt, J., and Duvenaud, D.
\newblock {Neural Ordinary Differential Equations}.
\newblock \emph{Advances in Neural Information Processing Systems}, 2018.

\bibitem[De~Brouwer et~al.(2019)De~Brouwer, Simm, Arany, and Moreau]{de2019gru}
De~Brouwer, E., Simm, J., Arany, A., and Moreau, Y.
\newblock Gru-ode-bayes: Continuous modeling of sporadically-observed time
  series.
\newblock In \emph{Advances in Neural Information Processing Systems}, pp.\
  7379--7390, 2019.

\bibitem[De~Mulder et~al.(2015)De~Mulder, Bethard, and Moens]{de2015survey}
De~Mulder, W., Bethard, S., and Moens, M.-F.
\newblock A survey on the application of recurrent neural networks to
  statistical language modeling.
\newblock \emph{Computer Speech \& Language}, 30\penalty0 (1):\penalty0 61--98,
  2015.

\bibitem[Foster et~al.(2020)Foster, Lyons, and Oberhauser]{foster2020poly}
Foster, J., Lyons, T., and Oberhauser, H.
\newblock {An optimal polynomial approximation of Brownian motion}.
\newblock \emph{SIAM Journal on Numerical Analysis}, 58\penalty0 (3):\penalty0
  1393--1421, 2020.

\bibitem[Friz \& Victoir(2010)Friz and Victoir]{friz2010multidimensional}
Friz, P.~K. and Victoir, N.~B.
\newblock \emph{Multidimensional stochastic processes as rough paths: theory
  and applications}, volume 120.
\newblock Cambridge University Press, 2010.

\bibitem[Graves(2012)]{graves2012supervised}
Graves, A.
\newblock Supervised sequence labelling.
\newblock In \emph{Supervised sequence labelling with recurrent neural
  networks}, pp.\  5--13. Springer, 2012.

\bibitem[Gu et~al.(2020)Gu, Dao, Ermon, Rudra, and Re]{hippo}
Gu, A., Dao, T., Ermon, S., Rudra, A., and Re, C.
\newblock {HiPPO: Recurrent Memory with Optimal Polynomial Projections}.
\newblock \emph{arXiv:2008.07669}, 2020.

\bibitem[Gyurk\'{o}(2008)]{gyurko2008thesis}
Gyurk\'{o}, L.~G.
\newblock \emph{Numerical methods for approximating solutions to rough
  differential equations}.
\newblock DPhil thesis, University of Oxford, 2008.

\bibitem[Hambly \& Lyons(2010)Hambly and Lyons]{hambly2010sigunique}
Hambly, B. and Lyons, T.
\newblock Uniqueness for the signature of a path of bounded variation and the
  reduced path group.
\newblock \emph{Annals of Mathematics}, 171, 2010.

\bibitem[Jing et~al.(2019)Jing, Gulcehre, Peurifoy, Shen, Tegmark, Soljacic,
  and Bengio]{jing2019gated}
Jing, L., Gulcehre, C., Peurifoy, J., Shen, Y., Tegmark, M., Soljacic, M., and
  Bengio, Y.
\newblock Gated orthogonal recurrent units: On learning to forget.
\newblock \emph{Neural computation}, 31\penalty0 (4):\penalty0 765--783, 2019.

\bibitem[Kidger \& Lyons(2020{\natexlab{a}})Kidger and Lyons]{deepandnarrow}
Kidger, P. and Lyons, T.
\newblock {Universal Approximation with Deep Narrow Networks}.
\newblock \emph{COLT 2020}, 2020{\natexlab{a}}.

\bibitem[Kidger \& Lyons(2020{\natexlab{b}})Kidger and Lyons]{signatory}
Kidger, P. and Lyons, T.
\newblock {Signatory: differentiable computations of the signature and
  logsignature transforms, on both CPU and GPU}.
\newblock \emph{arXiv:2001.00706}, 2020{\natexlab{b}}.
\newblock URL \url{https://github.com/patrick-kidger/signatory}.

\bibitem[Kidger et~al.(2020)Kidger, Morrill, Foster, and
  Lyons]{kidger2020neural}
Kidger, P., Morrill, J., Foster, J., and Lyons, T.
\newblock Neural controlled differential equations for irregular time series.
\newblock \emph{arXiv preprint arXiv:2005.08926}, 2020.

\bibitem[Lechner \& Hasani(2020)Lechner and Hasani]{lechner2020learning}
Lechner, M. and Hasani, R.
\newblock Learning long-term dependencies in irregularly-sampled time series.
\newblock \emph{arXiv preprint arXiv:2006.04418}, 2020.

\bibitem[Li et~al.(2018)Li, Li, Cook, Zhu, and Gao]{li2018independently}
Li, S., Li, W., Cook, C., Zhu, C., and Gao, Y.
\newblock Independently recurrent neural network (indrnn): Building a longer
  and deeper rnn.
\newblock In \emph{Proceedings of the IEEE conference on computer vision and
  pattern recognition}, pp.\  5457--5466, 2018.

\bibitem[Li et~al.(2019)Li, Jin, Xuan, Zhou, Chen, Wang, and
  Yan]{li2019enhancing}
Li, S., Jin, X., Xuan, Y., Zhou, X., Chen, W., Wang, Y.-X., and Yan, X.
\newblock Enhancing the locality and breaking the memory bottleneck of
  transformer on time series forecasting.
\newblock In \emph{Advances in Neural Information Processing Systems}, pp.\
  5243--5253, 2019.

\bibitem[Liao et~al.(2019)Liao, Lyons, Yang, and Ni]{liao2019learning}
Liao, S., Lyons, T., Yang, W., and Ni, H.
\newblock {Learning stochastic differential equations using RNN with log
  signature features}.
\newblock \emph{arXiv preprint arXiv:1908.08286}, 2019.

\bibitem[Lyons(2014)]{lyons2014streams}
Lyons, T.
\newblock Rough paths, signatures and the modelling of functions on streams.
\newblock \emph{Proceedings of the International Congress of Mathematicians},
  4, 2014.

\bibitem[Lyons et~al.(2007)Lyons, Michael, and Thierry]{roughpath2007notes}
Lyons, T., Michael, C., and Thierry, L.
\newblock \emph{Differential equations driven by rough paths}.
\newblock In \'{E}cole d’\'{e}t\'{e} de probabilit\'{e}s de Saint-Flour
  XXXIV-2004, edited by J. Picard in Volume 1908 of Lecture Notes in
  Mathematics, Berlin, Springer, 2007.

\bibitem[Lyons(1998)]{lyons1998differential}
Lyons, T.~J.
\newblock Differential equations driven by rough signals.
\newblock \emph{Revista Matem{\'a}tica Iberoamericana}, 14\penalty0
  (2):\penalty0 215--310, 1998.

\bibitem[Lyons et~al.(2004)Lyons, Caruana, L{\'e}vy, and
  Picard]{lyons2004differential}
Lyons, T.~J., Caruana, M., L{\'e}vy, T., and Picard, J.
\newblock Differential equations driven by rough paths.
\newblock \emph{Ecole d’{\'e}t{\'e} de Probabilit{\'e}s de Saint-Flour},
  34:\penalty0 1--93, 2004.

\bibitem[Pinkus(1999)]{pinkus}
Pinkus, A.
\newblock Approximation theory of the {MLP} model in neural networks.
\newblock \emph{Acta Numer.}, 8:\penalty0 143--195, 1999.

\bibitem[Reizenstein(2017)]{reizenstein2017logsig}
Reizenstein, J.
\newblock {Calculation of Iterated-Integral Signatures and Log Signatures}.
\newblock \emph{arXiv preprint arXiv:1712.02757}, 2017.

\bibitem[Rubanova et~al.(2019)Rubanova, Chen, and Duvenaud]{rubanova2019latent}
Rubanova, Y., Chen, R.~T., and Duvenaud, D.
\newblock Latent odes for irregularly-sampled time series.
\newblock \emph{arXiv preprint arXiv:1907.03907}, 2019.

\bibitem[Ryan(2002)]{tensorproducts2002book}
Ryan, R.~A.
\newblock \emph{{Introduction to Tensor Products of Banach Spaces}}.
\newblock Springer Monographs in Mathematics, Springer, 2002.

\bibitem[Sourkov(2018)]{sourkov2018igloo}
Sourkov, V.
\newblock Igloo: Slicing the features space to represent sequences.
\newblock \emph{arXiv preprint arXiv:1807.03402}, 2018.

\bibitem[Tan et~al.(2020)Tan, Bagnall, Bergmeir, Keogh, Petitjean, and
  Webb]{MonashTSRegressionArchive}
Tan, C.~W., Bagnall, A., Bergmeir, C., Keogh, E., Petitjean, F., and Webb,
  G.~I.
\newblock Monash university, uea, ucr time series regression archive, 2020.
\newblock \url{http://timeseriesregression.org/}.

\bibitem[Voelker et~al.(2019)Voelker, Kaji\'{c}, and Eliasmith]{lmu}
Voelker, A., Kaji\'{c}, I., and Eliasmith, C.
\newblock Legendre memory units: Continuous-time representation in recurrent
  neural networks.
\newblock In \emph{Advances in Neural Information Processing Systems 32}.
  Curran Associates, Inc., 2019.

\bibitem[Wisdom et~al.(2016)Wisdom, Powers, Hershey, Le~Roux, and
  Atlas]{wisdom2016full}
Wisdom, S., Powers, T., Hershey, J., Le~Roux, J., and Atlas, L.
\newblock Full-capacity unitary recurrent neural networks.
\newblock In \emph{Advances in neural information processing systems}, pp.\
  4880--4888, 2016.

\bibitem[Wong \& Zakai(1965)Wong and Zakai]{wongzakai1965}
Wong, E. and Zakai, M.
\newblock {On the Convergence of Ordinary Integrals to Stochastic Integrals}.
\newblock \emph{Annals of Mathematical Statistics}, 36\penalty0 (5):\penalty0
  1560--1564, 1965.

\end{thebibliography}
\bibliographystyle{style/icml2021}


\newpage
\onecolumn
\appendix
\begin{center}
	\huge Supplementary material
\end{center}
	
In sections \ref{apx:logode} and \ref{apx:logodeconv}, we give a more thorough introduction to solving CDEs via the log-ODE method.

In section \ref{apx:experiments} we discuss the experimental details such as the choice of network structure, computing infrastructure and hyperparameter selection approach.

In section \ref{apx:results} we give a full breakdown of every experimental result.

\section{An introduction to the log-ODE method for controlled differential equations}\label{apx:logode}

The log-ODE method is an effective method for approximating the controlled differential equation:
\begin{align}
\dby Y_t & = f(Y_t)\,\dby X_t,
\label{eq:cde_for_logode}\\[3pt]
Y_0 & = \xi,\nonumber
\end{align}
where $X : [0,T]\rightarrow \R^d$ has finite length, $\xi\in\R^n$ and $f : \R^n \rightarrow L(\R^d, \R^n)$ is a function with certain smoothness assumptions so that the CDE (\ref{eq:cde_for_logode}) is well posed. Throughout these appendices, $L(U, V)$ denotes the space of linear maps between the vector spaces $U$ and $V$. In rough path theory, the function $f$ is referred to as the ``vector field'' of (\ref{eq:cde_for_logode}) and usually assumed to have \textup{Lip(}$\boldsymbol{\gamma}$\textup{)} regularity
(see definition 10.2 in \citet{friz2010multidimensional}). In this section, we assume one of the below conditions on the vector field:
\begin{enumerate}
\item $f$ is bounded and has $N$ bounded derivatives.
\item $f$ is linear.
\end{enumerate}

In order to define the log-ODE method, we will first consider the tensor algebra and path signature.\smallbreak

\begin{definition}\label{def:tensoralgebras}
We say that $T\big(\R^d\big):= \R \oplus \R^d \oplus (\R^d)^{\otimes 2}\oplus\cdots$ is the \emph{tensor algebra} of $\R^d$ and $T\big(\big(\R^d\big)\big):=\big\{\boldsymbol{a} = \big(a_{0}, a_{1}, \cdots\big) : a_{k}\in\big(\R^d\big)^{\otimes k}\,\,\forall k\geq 0\big\}$ is the set of formal series of tensors of $\R^d$.
Moreover, $T\big(\R^d\big)$ and $T\big(\big(\R^d\big)\big)$ can be endowed with the operations
of addition and multiplication.
Given $\boldsymbol{a} = (a_{0}, a_{1}, \cdots)$ and $\boldsymbol{b} = (b_{0}, b_{1}, \cdots)$, we have
\begin{align}
\boldsymbol{a} + \boldsymbol{b} & =\big(a_{0} + b_{0}, a_{1} + b_{1}, \cdots\big),\label{eq:tensoradd}\\
\boldsymbol{a} \otimes \boldsymbol{b} & =\big(c_{0}, c_{1}, c_{2}, \cdots\big),\label{eq:tensormult}
\end{align}
where for $n\geq 0$, the $n$-th term $c_{n}\in\big(\R^d\big)^{\otimes n}$ can be written as
\begin{equation}\label{eq:tproduct2}
c_{n} := \sum_{k=0}^{n}a_{k}\otimes b_{n-k}.
\end{equation}
The use of $\otimes$ in equation (\ref{eq:tproduct2}) denotes the usual tensor product. The use of $\otimes$ in equation (\ref{eq:tensormult}) is \textit{also} referred to as the ``tensor product'': when precisely one $a_i$ and precisely one $b_i$ are nonzero then it reduces to the usual tensor product; equation (\ref{eq:tensormult}) is a generalisation.
\end{definition}\medbreak

\begin{definition}\label{def:signature_appendix}
The \emph{signature} of a finite length path $X : [0,T]\rightarrow\R^d$ over
the interval $[s,t]$ is defined as the following collection of iterated (Riemann--Stieltjes) integrals:
\begin{equation}\label{eq:fullsig}
S_{s,t}\big(X\big) := \Big(1\hspace{0.5mm}, X_{s,t}^{(1)}, X_{s,t}^{(2)}, X_{s,t}^{(3)},\ldots\Big)\in T\big(\big(\R^d\big)\big),
\end{equation}
where for $n\geq 1$,
\begin{equation}
X_{s,t}^{(n)} := \idotsint\displaylimits_{s<u_{1}<\cdots<u_{n}<t}\dby X_{u_{1}}\otimes\cdots\otimes \dby X_{u_{n}}\in\big(\R^d\big)^{\otimes n}.\nonumber
\end{equation}
Similarly, we can define the depth-N (or truncated) signature of the path $X$ on $[s,t]$ as
\begin{equation}\label{eq:truncsig}
S_{s,t}^{N}\big(X\big) := \Bigg(\hspace{0.5mm}1\hspace{0.5mm}, X_{s,t}^{(1)}, X_{s,t}^{(2)},\ldots,X_{s,t}^{(N)}\Bigg)\in T^N\big(\R^d\big),
\end{equation}
where $T^N\big(\R^d\big):= \R \oplus \R^d \oplus (\R^d)^{\otimes 2}\oplus\cdots\oplus (\R^d)^{\otimes N}$ denotes the truncated tensor algebra.
\end{definition}

The (truncated) signature provides a natural feature set that describes the effects a path $X$ has on systems that can be modelled by (\ref{eq:cde_for_logode}). That said, defining the log-ODE method actually requires the so-called ``log-signature'' which efficiently encodes the same integral information as the signature. The log-signature is obtained from the path's signature by removing certain algebraic redundancies, such as
\begin{equation}
\int_0^t\int_0^s \dby X_u^{i} \dby X_s^{j} + \int_0^t\int_0^s \dby X_u^{j} \dby X_s^{i} = X_t^{i}X_t^{j},
\nonumber
\end{equation}
for $i,j\in\{1,\cdots, d\}$, which follows by the integration-by-parts formula. To this end, we will define the logarithm map on the depth-$N$ truncated tensor algebra $T^N\big(\R^d\big):= \R \oplus \R^d\oplus\cdots\oplus (\R^d)^{\otimes N}$.

\begin{definition}[The logarithm of a formal series]\label{def:tensor_log} For $\boldsymbol{a} = (a_{0}, a_{1}, \cdots) \in T\big(\big(\R^{d}\big)\big)$ with $a_{0} > 0$, define $\log(\boldsymbol{a})$ to be the element of $T\big(\big(\R^{d}\big)\big)$ given by the following series:
\begin{align}
\log(\boldsymbol{a}) & := \log(a_{0}) + \sum_{n=1}^{\infty}\frac{(-1)^{n}}{n}\bigg(\boldsymbol{1} - \frac{\boldsymbol{a}}{a_{0}}\bigg)^{\otimes n},\label{eq:fulltensorlogseries}
\end{align}
where $\boldsymbol{1} = (1, 0, \cdots)$ is the unit element of $T\big(\big(\R^{d}\big)\big)$ and $\log(a_{0})$ is viewed as $\log(a_{0})\boldsymbol{1}$.
\end{definition}
\begin{definition}[The logarithm of a truncated series] For $\boldsymbol{a} = (a_{0}, a_{1}, \cdots, a_N) \in T\big(\big(\R^{d}\big)\big)$ with $a_{0} > 0$, define $\log^N(\boldsymbol{a})$ to be the element of $T^N\big(\R^d\big)$ defined from the logarithm map (\ref{eq:fulltensorlogseries}) as
\begin{align}
\log^N(\boldsymbol{a})  & := P_N\big(\log(\boldsymbol{\widetilde{a}})\big),\label{eq:trunctensorlogseries}
\end{align}
where $\boldsymbol{\widetilde{a}} := (a_{0}, a_{1}, \cdots, a_N, 0, \cdots)\in T\big(\big(\R^{d}\big)\big)$ and $P_N$ denotes the standard projection map from $T\big(\big(\R^{d}\big)\big)$ onto $T^N\big(\R^d\big)$.
\end{definition}

\begin{definition}\label{def:logsig_appendix}The \emph{log-signature} of a finite length path $X : [0,T]\rightarrow\R^d$ over the interval $[s,t]$ is defined as $\logsig_{s,t}(X) := \log(S_{s,t}(X))$, where $S_{s,t}(X)$ denotes the path signature of $X$ given by Definition \ref{def:signature_appendix}.
Likewise, the depth-N (or truncated) log-signature of $X$ is defined for each $N\geq 1$ as $\logsig_{s,t}^N(X) := \log^N(S_{s,t}^N(X))$.
\end{definition}

In this section, we view each $\logsig_{s,t}^N(X)$ as an element of $T^N\big(\R^d\big)$ to simplify the definition of the log-ODE method. That said, this is equivalent to the definition used in the main body of the paper, which defines the log-signature as a map from $X \colon [0, T] \to \reals^d$ to $\reals^{\beta(d, N)}$. This corresponds to the interpretation of a log-signature as an element of a certain free Lie algebra (see, for example, \citet{roughpath2007notes, reizenstein2017logsig} for details). The exact form of $\beta(d, N)$ is given by
\begin{equation*}
    \beta(d, N) = \sum_{k = 1}^N \frac{1}{k} \sum_{i | k} \mu\left(\frac{k}{i}\right) d^i
    \label{eq:logsig-dim}
\end{equation*}
with $\mu$ the M{\"o}bius function. The precise order of this remains an open question.

The final ingredient we use to define the log-ODE method are the derivatives of the vector field $f$.
It is worth noting that these derivatives also naturally appear in the Taylor expansion of (\ref{eq:cde_for_logode}).

\begin{definition}[Vector field derivatives]\label{def:vect_derivative} We define $f^{\circ k} : \R^n\rightarrow L((\R^d)^{\otimes k}, \R^n)$ recursively by
\begin{align*}
f^{\circ (0)}(y) & := y,\\
f^{\circ (1)}(y) & := f(y),\\
f^{\circ (k + 1)}(y) & := D\big(f^{\circ k}\big)(y)f(y),
\end{align*}
for $y\in\R^n$, where $D\big(f^{\circ k}\big)$ denotes the Fr\'{e}chet derivative of $f^{\circ k}$.
\end{definition}

Using these definitions, we can describe two closely related numerical methods for the CDE (\ref{eq:cde_for_logode}).\smallbreak

\begin{definition}[\textbf{The Taylor method}] Given the CDE (\ref{eq:cde_for_logode}), we can use the path signature of $X$ to approximate the solution $Y$ on an interval $[s,t]$ via its truncated Taylor expansion. That is, we use
\begin{equation}
\mathrm{Taylor}(Y_s, f, S_{s,t}^N(X)) := \sum_{k=0}^N f^{\circ k}(Y_s)\pi_k \big(S_{s,t}^N(X)\big),
\label{eq:taylor_def}
\end{equation}
as an approximation for $Y_t$ where each $\pi_k : T^N(\R^d)\rightarrow (\R^d)^{\otimes k}$ is the projection map onto $\big(\R^d\big)^{\otimes k}$.
\end{definition}\medbreak

\begin{definition}[\textbf{The Log-ODE method}]\label{def:logode} Using the Taylor method (\ref{eq:taylor_def}), we can define the function $\widehat{f} : \R^n \rightarrow L(T^N(\R^d), \R^n)$ by $\widehat{f}(z) := \mathrm{Taylor}(z, f, \cdot\m)$. By applying $\widehat{f}$ to the truncated log-signature of the path $X$ over an interval $[s,t]$, we can define the following ODE on $[0,1]$
\begin{align}
\frac{\dby z}{\dby u} & = \widehat{f}(z)\logsig_{s,t}^N(X),\label{eq:first_logode_def}\\[3pt]
z(0) & = Y_s.\nonumber
\end{align}
Then the log-ODE approximation of $Y_t$ $($given $Y_s$ and $\logsig_{s,t}^N(X))$ is defined as
\begin{equation}
\mathrm{LogODE}(Y_s, f, \logsig_{s,t}^N(X)) := z(1).
\label{eq:second_logode_def}
\end{equation}
\end{definition}
\begin{remark}
Our assumptions of $f$ ensure that $z\mapsto\widehat{f}(z)\logsig_{s,t}^N(X)$ is either globally bounded and Lipschitz continuous or linear. Hence both the Taylor and log-ODE methods are well defined.
\end{remark}
\begin{remark}
It is well known that the log-signature of a path $X$ lies in a certain free Lie algebra (this is detailed in section 2.2.4 of \citet{roughpath2007notes}). Furthermore, it is also a theorem that the Lie bracket of two vector fields is itself a vector field which doesn't depend on choices of basis.
By expressing $\logsig_{s,t}^N(X)$ using a basis of the free Lie algebra, it can be shown that only the vector field $f$ and its (iterated) Lie brackets are required to construct the log-ODE vector field $\widehat{f}(z)\logsig_{s,t}^N(X)$. In particular, this leads to our construction of the log-ODE (\ref{eq:log-ode}) using the Lyndon basis of the free Lie algebra (see \cite{reizenstein2017logsig} for a precise description of the Lyndon basis). We direct the reader to \citet{lyons2014streams} and \citet{logode2014estimate} for further details on this Lie theory.
\end{remark}


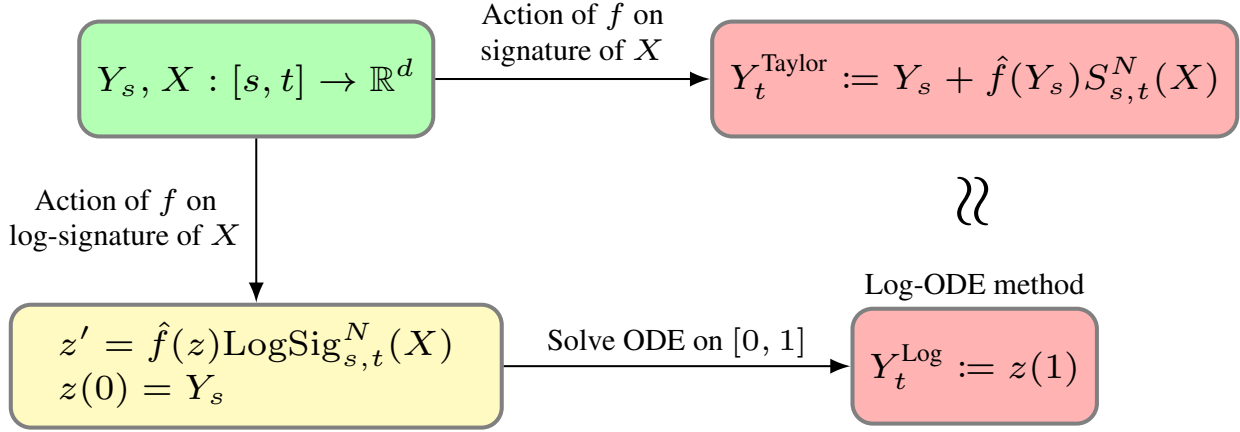
\begin{figure*}
    \centering
    \resizebox{\textwidth}{!}{%
    
    \begin{tikzpicture}
        \node (initial) at (0, 0) [black, thick, rounded corners, rectangle, draw=black!50, fill=green!30, minimum height=0.8cm] {\scriptsize $Y_s,$ $X:[s, t] \rightarrow \mathbb{R}^d$};
        \node (taylor) at (5, 0) [black, thick, rounded corners, rectangle, draw=black!50, fill=red!30, minimum height=0.8cm] {\scriptsize $Y_t^{\text{Taylor}} \coloneqq Y_s + \hat{f}(Y_s)S^N_{s, t}(X)$};
        \node (logsig) at (0, -2) [black, thick, rounded corners, rectangle, draw=black!50, fill=yellow!30, minimum height=0.8cm] {
            \scriptsize 
            \begin{tabular}{l}
                $z' = \hat{f}(z)\mathrm{LogSig}^N_{s, t}(X)$\\
                $z(0)=Y_s$    
            \end{tabular}
        };
        \node (logode) at (5, -2) [black, thick, rounded corners, rectangle, draw=black!50, fill=red!30, minimum height=0.8cm, label={[shift={(0, -0.08)}]\tiny Log-ODE method}] {\scriptsize $Y^{\text{Log}}_t \coloneqq z(1)$};
        \node (approx) at (5, -0.85) [rotate=90] {\Large $\approx$};
        
        \draw[->] (initial) -- (taylor) node[midway, above, yshift=-0.4ex] {\tiny \begin{tabular}{c}Action of $f$ on\\signature of $X$\end{tabular}};
        \draw[->] (initial) -- (logsig) node[midway, left, xshift=1.5ex] {\tiny \begin{tabular}{c}Action of $f$ on\\log-signature of $X$\end{tabular}};
        \draw[->] (logsig) -- (logode) node[midway, above, yshift=-0.4ex] {\tiny Solve ODE on $[0, 1]$};

    \end{tikzpicture}
}
    \caption{Illustration of the log-ODE and Taylor methods for controlled differential equations.} 
    \label{fig:logode_taylor}
\end{figure*}

To illustrate the log-ODE method, we give two examples:
\begin{example}[The ``increment-only'' log-ODE method]
When $N = 1$, the ODE (\ref{eq:first_logode_def}) becomes
\begin{align*}
\frac{\dby z}{\dby u} & = f(z)X_{s,t},\\[3pt]
z(0) & = Y_s.
\end{align*}
Therefore we see that this ``increment-only'' log-ODE method is equivalent to driving the original CDE (\ref{eq:cde_for_logode}) by a piecewise linear approximation of the control path $X$. This is a classical approach
for stochastic differential equations (i.e. when $X_t = (t, W_t)$ with $W$ denoting a Brownian motion) and is an example of a Wong-Zakai approximation (see \citet{wongzakai1965} for further details).
\end{example}

\begin{example}[An application for SDE simulation] Consider the following affine SDE,
\begin{align}\label{eq:IGBM}
\dby Y_t & = a(b-y_t)\,\dby t + \sigma\m y_t\circ \dby W_t,\\[3pt]
y(0) & = y_0\in\R_{\geq 0}\m,\nonumber
\end{align}
where $a,b\geq 0$ are the mean reversion parameters, $\sigma\geq 0$ is the volatility and $W$ denotes a standard real-valued Brownian motion. The $\circ$ means that this SDE is understood in the Stratonovich sense.
The SDE (\ref{eq:IGBM}) is known in the literature as Inhomogeneous Geometric Brownian Motion (or IGBM).
Using the control path $X = \{(t, W_t)\}_{t\geq 0}$ and setting $N = 3$, the log-ODE (\ref{eq:first_logode_def}) becomes
\begin{align*}
\frac{\dby z}{\dby u} & = a(b-z_u)h + \sigma\m z_u W_{s,t} - ab\sigma A_{s,t} + ab\sigma^2 L_{s,t}^{(1)} + a^2b\sigma L_{s,t}^{(2)},\\[3pt]
z(0) & = Y_s.
\end{align*}
where $h := t- s$ denotes the step size and the random variables $A_{s,t}, L_{s,t}^{(1)}, L_{s,t}^{(2)}$ are given by
\begin{align*}
A_{s,t} & := \int_s^t W_{s,r}\,\dby r - \frac{1}{2}hW_{s,t},\\[3pt]
L_{s,t}^{(1)} & := \int_s^t\int_s^r W_{s,v}\,\circ \dby W_v\,\dby r - \frac{1}{2}W_{s,t}A_{s,t} - \frac{1}{6}hW_{s,t}^2,\\[3pt]
L_{s,t}^{(2)} & := \int_s^t\int_s^r W_{s,v}\,\dby v\,\dby r - \frac{1}{2}h A_{s,t} - \frac{1}{6}h^2W_{s,t}.
\end{align*}
In \citet{foster2020poly}, the depth-3 log-signature of $X = \{(t, W_t)\}_{t\geq 0}$ was approximated so that the above log-ODE method became practical and this numerical scheme exhibited state-of-the-art convergence rates. For example, the approximation error produced by 25 steps of the
high order log-ODE method was similar to the error of the ``increment only'' log-ODE method with 1000 steps.
\end{example}

\section{Convergence of the log-ODE method for rough differential equations}\label{apx:logodeconv}

In this section, we shall present ``rough path'' error estimates for the log-ODE method. In addition, we will discuss the case when the vector fields governing the rough differential equation are linear.
We begin by stating the main result of \citet{logode2014estimate} which quantifies the approximation error of the log-ODE method in terms of the regularity of the systems vector field $f$ and control path $X$.
Since this section uses a number of technical definitions from rough path theory, we recommend \citet{roughpath2007notes} as an introduction to the subject. 

For $T > 0$, we will use the notation $\triangle_T := \{(s,t)\in [0,T]^2: s < t\}$ to denote a rescaled $2$-simplex.

\begin{theorem}[Lemma 15 in \citet{logode2014estimate}]\label{thm:logODEthm}
Consider the rough differential equation
\begin{align}
\dby Y_t & = f(Y_t)\,\dby X_t,\label{eq:RDE}\\
Y_0 & = \xi,\nonumber
\end{align}
where we make the following assumptions:
\begin{itemize}
\item $X$ is a \emph{geometric $p$-rough path} in $\R^d$, that is $X : \triangle_T \rightarrow T^{\floor{p}}(\R^d)$ is a continuous path in the tensor algebra
$T^{\floor{p}}(\R^d) := \R \oplus \R^d \oplus \big(\R^d\big)^{\otimes 2} \oplus \cdots \oplus \big(\R^d\big)^{\otimes \floor{p}}$ with increments
\begin{align}
X_{s,t} & = \Big(1, X_{s,t}^{(1)}, X_{s,t}^{(2)}, \cdots, X_{s,t}^{(\floor{p})}\Big),\label{eq:roughpathincrements}\\
X_{s,t}^{(k)} & := \pi_k\big(X_{s,t}\big),\nonumber
\end{align}
where $\pi_k : T^{\floor{p}}\big(\R^d\big)\rightarrow \big(\R^d\big)^{\otimes k}$ is the projection map onto $\big(\R^d\big)^{\otimes k}$, such that there exists a sequence of
continuous finite variation paths $x_n : [0,T] \rightarrow \R^d$ whose truncated signatures converge to $X$ in the \emph{$\boldsymbol{p}$-variation metric}:
\begin{equation}
d_p\Big(S^{\floor{p}}(x_n), X\Big) \rightarrow 0,
\label{eq:rpconvege}
\end{equation}
as $n\rightarrow\infty$, where the $p$-variation between two continuous paths $Z^1$ and $Z^2$ in $T^{\floor{p}}(\R^d)$ is
\begin{equation}
d_p\big(Z^1, Z^2\big) := \max_{1\leq k\leq \floor{p}}\sup_{\D}\bigg(\sum_{t_i\in\D}\Big\|\pi_k\big(Z_{t_i, t_{i+1}}^1\big) - \pi_k\big(Z_{t_i, t_{i+1}}^2\big)\Big\|^\frac{p}{k}\bigg)^\frac{k}{p},
\label{eq:rpmetric}
\end{equation}
where the supremum is taken over all partitions $\D$ of $[0,T]$ and the norms $\|\cdot\|$ must satisfy (up to some constant)
\begin{equation}
\|a\otimes b\| \leq \|a\|\|b\|,\nonumber
\end{equation}
for $a\in(\R^d)^{\otimes n}$ and $b\in(\R^d)^{\otimes m}$. For example, we can take $\|\cdot\|$ to be the projective or injective tensor norms (see Propositions 2.1 and 3.1 in \citet{tensorproducts2002book}).

\item The solution $Y$ and its initial value $\xi$ both take their values in $\R^n$.

\item The collection of vector fields $\{f_1, \cdots, f_d\}$ on $\R^n$ are denoted by $f : \R^n\rightarrow L(\R^n, \R^d)$,
where $L(\R^n, \R^d)$ is the space of linear maps from $\R^n$ to $\R^d$. We will assume that $f$ has \emph{\textup{Lip(}$\boldsymbol{\gamma}$\textup{)} regularity} with $\gamma > p$. 
That is, $f$ it is bounded with $\floor{\gamma}$ bounded derivatives, the last being H\"{o}lder continuous with exponent $(\gamma - \floor\gamma)$. Hence the following norm is finite:
\begin{equation}
\|f\|_{\mathrm{Lip}(\gamma)} := \max_{0 \leq k\leq \floor{\gamma}}\big\|D^k f\big\|_{\infty} \vee \big\|D^{\floor{\gamma}} f\big\|_{(\gamma - \floor{\gamma})-\text{H\"{o}l}}\,,
\label{eq:lipgamma}
\end{equation}
where $D^k f$ is the $k$-th (Fr\'{e}chet) derivative of $f$ and $\|\cdot\|_{\alpha\text{-H\"{o}l}}$ is the standard $\alpha$-H\"{o}lder norm with $\alpha\in(0,1)$.

\item The RDE (\ref{eq:RDE}) is defined in the Lyon's sense. Therefore by the Universal Limit Theorem
(see Theorem 5.3 in \citet{roughpath2007notes}), there exists a unique solution $Y : [0,T]\rightarrow\R^n$.
\end{itemize}

We define the log-ODE for approximating the solution $Y$ over an interval $[s,t]\subset [0,T]$ as follows: 
\begin{enumerate}
\item Compute the depth-$\floor{\gamma}$ log-signature of the control path $X$ over $[s,t]$. That is, we obtain $\logsig_{s,t}^{\floor{\gamma}}(X) := \log_{\floor{\gamma}}\big(S_{s,t}^{\floor{\gamma}}(X)\big) \in T^{\floor{\gamma}}(\R^d)$, where $\log_{\floor{\gamma}}(\cdot)$ is defined by projecting the standard tensor logarithm map onto $\{a\in T^{\floor{\gamma}}(\R^d) : \pi_0(a)>0\}$.

\item Construct the following (well-posed) ODE on the interval $[0,1]$,
\begin{align}
\frac{\dby z^{s,t}}{\dby u} & = F\big(z^{s,t}\big),\label{eq:standardlogode}\\
z_0^{s,t} & = Y_s,\nonumber
\end{align}
where the vector field $F:\R^n\rightarrow\R^n$ is defined from the log-signature as
\begin{equation}
F(z) := \sum_{k=1}^{\floor{\gamma}}f^{\circ k}(z)\pi_k\Big(\logsig_{s,t}^{\floor{\gamma}}(X)\Big).
\label{eq:logodevectfield}
\end{equation}
Recall that $f^{\circ k} : \R^n\rightarrow L((\R^d)^{\otimes k}, \R^n)$ was defined previously in Definition \ref{def:vect_derivative}.
\end{enumerate}
Then we can approximate $Y_t$ using the $u = 1$ solution of (\ref{eq:standardlogode}). Moreover, there exists a universal constant $C_{p,\gamma}$ depending only on $p$ and $\gamma$ such that 
\begin{equation}
\big\|Y_t - z_1^{s,t}\big\| \leq C_{p,\gamma}\|f\|_{\mathrm{Lip}(\gamma)}^\gamma\|X\|_{p\text{-var};[s,t]}^\gamma,
\label{eq:local_logodeestimate}
\end{equation}
where $\|\cdot\|_{p\text{-var};[s,t]}$ is the $p$-variation norm defined for paths in $T^{\floor{p}}(\R^d)$ by
\begin{equation}
\|X\|_{p\text{-var};[s,t]} := \max_{1\leq k\leq \floor{p}}\sup_{\D}\bigg(\sum_{t_i\in\D}\big\|X_{t_i, t_{i+1}}^k\big\|^\frac{p}{k}\bigg)^\frac{k}{p},
\label{eq:rpnorm}
\end{equation}
with the supremum taken over all partitions $\D$ of $[s,t]$.
\end{theorem}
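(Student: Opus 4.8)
The plan is to compare both the RDE solution $Y_t$ and the log-ODE output $z_1^{s,t}$ to a single reference object: the degree-$\floor{\gamma}$ Taylor polynomial of the RDE. Throughout fix the interval $[s,t]$, write $N := \floor{\gamma}$ and $\omega := \|X\|_{p\text{-var};[s,t]}$, and set
\begin{equation*}
    \mathcal{T}_{s,t} \;:=\; \sum_{k=0}^{N} f^{\circ k}(Y_s)\,\pi_k\big(S_{s,t}(X)\big),
\end{equation*}
with $f^{\circ 0}(Y_s) := Y_s$ and $f^{\circ k}$ as in Definition \ref{def:vect_derivative}. I would establish the two bounds
\begin{align*}
    \big\|Y_t - \mathcal{T}_{s,t}\big\| &\;\leq\; C_{p,\gamma}\,\|f\|_{\mathrm{Lip}(\gamma)}^{\gamma}\,\omega^{\gamma}, \\
    \big\|z_1^{s,t} - \mathcal{T}_{s,t}\big\| &\;\leq\; C_{p,\gamma}\,\|f\|_{\mathrm{Lip}(\gamma)}^{\gamma}\,\omega^{\gamma},
\end{align*}
and then \eqref{eq:local_logodeestimate} follows from the triangle inequality.

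The first bound is the classical rough Euler (Davie) estimate for RDEs: it says precisely that the solution of \eqref{eq:RDE}, driven by a geometric $p$-rough path with a $\mathrm{Lip}(\gamma)$ vector field and $\gamma > p$, agrees with its degree-$N$ signature Taylor polynomial up to an error of size $\omega^{\gamma}$, with the stated dependence on $\|f\|_{\mathrm{Lip}(\gamma)}$. The standard route combines the a priori local bound $\|Y_t - Y_s\| \lesssim \|f\|_{\mathrm{Lip}(\gamma)}\,\omega$ on intervals of bounded $p$-variation (the Universal Limit Theorem) with a self-consistency estimate on the iterated-integral expansion of $s'\mapsto f(Y_{s'})$ against $\dby X$; the fractional gain from $\omega^{N}$ to $\omega^{\gamma}$ comes from H\"older continuity of $D^{N}f$ with exponent $\gamma - N$. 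I would invoke this as known (see \citet{roughpath2007notes} and \citet{friz2010multidimensional}) rather than reprove it.

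The second bound is the genuinely new ingredient and the main obstacle. Here one Taylor-expands the \emph{smooth} ODE \eqref{eq:standardlogode} on $[0,1]$: since $\dby z/\dby u = F(z)$, the terminal value is $z_1^{s,t} = \sum_{j=0}^{N}\tfrac{1}{j!}F^{\circ j}(Y_s) + R_N$, where $F^{\circ 0}(y):=y$, $F^{\circ(j+1)}:=D(F^{\circ j})\,F$, and $R_N$ is the integral remainder. The algebraic heart is the identity that, modulo terms of total tensor-degree exceeding $N$, the collection $\{F^{\circ j}(Y_s)\}_{j\leq N}$ reassembles to $\mathcal{T}_{s,t}$: this holds because $\exp$ and $\log$ are mutually inverse on the truncated tensor algebra, so $\exp\big(\logsig_{s,t}^{N}(X)\big) = S_{s,t}^{N}(X)$, and because the assignment sending a word $e_{i_1}\otimes\cdots\otimes e_{i_k}$ to the composed first-order operator $f_{i_1}\cdots f_{i_k}$ is an algebra homomorphism intertwining the tensor exponential with the time-$1$ flow of the corresponding Lie-series vector field (equivalently: only iterated Lie brackets of $f$ ever appear, as noted after Definition \ref{def:logode}, so the mismatch in regularity is only apparent). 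What then remains is purely quantitative: the size bounds $\|\pi_k(\logsig_{s,t}^{N}(X))\| \leq C_{p,k}\,\omega^{k}$ for $k\leq N$ (from the signature bounds implicit in \eqref{eq:rpnorm} and the tensor-logarithm series), together with bounds on the discarded high-degree terms and on $R_N$ in terms of $\|f\|_{\mathrm{Lip}(\gamma)}$ and these log-signature sizes, all to be collected into $C_{p,\gamma}\,\|f\|_{\mathrm{Lip}(\gamma)}^{\gamma}\,\omega^{\gamma}$ — once more recovering the exponent $\gamma$ from H\"older continuity of the top derivative of $f$. Carrying this out while using only the available $N$ derivatives of $f$ and keeping every constant dependent on $(p,\gamma)$ alone is the technical crux, and is the content of the argument in \citet{logode2014estimate}.

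Finally, when $f$ is assumed linear rather than $\mathrm{Lip}(\gamma)$, the same scheme applies with the a priori local bound and the derivative estimates replaced by Gr\"onwall-type global controls, giving a variant of \eqref{eq:local_logodeestimate} in which $\|f\|_{\mathrm{Lip}(\gamma)}^{\gamma}$ is replaced by an exponential factor in $\|f\|$ and no boundedness is needed. Chaining the local estimate \eqref{eq:local_logodeestimate} across a partition $a = r_0 < \cdots < r_m = b$ and using superadditivity of the control $(s,t)\mapsto\|X\|_{p\text{-var};[s,t]}^{p}$ then upgrades the local bound to the global convergence rate (governed by $\gamma/p > 1$) used in the main text.
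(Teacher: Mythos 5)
The paper does not actually prove this statement: it is imported wholesale as Lemma 15 of \citet{logode2014estimate}, and the appendix only restates the hypotheses and the bound. Your two-sided decomposition through the degree-$\floor{\gamma}$ Taylor polynomial $\mathcal{T}_{s,t}$ --- the rough Euler/Davie estimate for $\|Y_t-\mathcal{T}_{s,t}\|$, and the $\exp$/$\log$ duality on the truncated tensor algebra together with the word-to-differential-operator homomorphism for $\|z_1^{s,t}-\mathcal{T}_{s,t}\|$ --- is exactly the strategy of the proof in that reference, so as an outline it is faithful; but since you defer both substantive estimates back to the same citation, the proposal is in effect an annotated version of what the paper already does rather than an independent argument.

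One step would fail as literally written, and it is worth being precise about it. You expand the flow of \eqref{eq:standardlogode} as $z_1^{s,t}=\sum_{j=0}^{N}\tfrac{1}{j!}F^{\circ j}(Y_s)+R_N$ with $N=\floor{\gamma}$. But $f^{\circ k}$ consumes $k-1$ derivatives of $f$, so $F$ already involves $D^{N-1}f$, and the flow iterate $F^{\circ j}$ involves $D^{j-1}F$ and hence up to $D^{N+j-2}f$; this exceeds the $\floor{\gamma}=N$ available derivatives as soon as $j\geq 3$. Since each application of $F$ only gains one power of your $\omega=\|X\|_{p\text{-var};[s,t]}$, truncating at $j=2$ leaves a remainder of order $\omega^{3}$, far short of the required $\omega^{\gamma}$. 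The resolution in \citet{logode2014estimate} is not a direct order-$N$ Taylor expansion of the flow in $u$: it exploits that $\pi_k\big(\logsig_{s,t}^{N}(X)\big)=O(\omega^{k})$ so that the higher-order pieces of $F$ are themselves already small, and compares the log-ODE solution to the Euler approximation by a bootstrapped integral estimate rather than by differentiating $F$ repeatedly. You flag ``using only the available $N$ derivatives of $f$'' as the technical crux, which is the right instinct, but the fix changes the shape of the argument rather than merely tightening constants; as stated, the second of your two displayed bounds is asserted rather than derivable from the expansion you propose.
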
\medbreak
\begin{remark}
If the vector fields $\{f_1, \cdots, f_d\}$ are linear, then it immediately follows that $F$ is linear.
\end{remark}

Although the above theorem requires some sophisticated theory, it has a simple conclusion - namely
that log-ODEs can approximate controlled differential equations. That said, the estimate (\ref{eq:local_logodeestimate}) does not directly apply when the vector fields $\{f_i\}$ are linear as they would be unbounded. Fortunately, it is well known that linear RDEs are well posed and the growth of their solutions can be estimated.
\begin{theorem}[Theorem 10.57 in \citet{friz2010multidimensional}]\label{thm:linearexistance}
Consider the linear RDE on $[0,T]$
\begin{align*}
\dby Y_t & = f(Y_t)\,\dby X_t,\\
Y_0 & = \xi,
\end{align*}
where $X$ is a geometric $p$-rough path in $\R^d$, $\xi\in\R^n$ and the vector fields $\{f_i\}_{1\leq i\leq d}$ take the form $f_i(y) = A_i y + B$ where $\{A_{i}\}$ and $\{B_i\}$ are $n\times n$ matrices. Let $K$ denote an upper bound on $\max_i (\|A_i\| + \|B_i\|)$. Then a unique solution $Y:[0,T]\rightarrow\R^n$ exists. Moreover, it is bounded and there exists a constant $C_p$ depending only on $p$ such that
\begin{equation}
\|Y_t - Y_s\| \leq C_p\big(1+\|\xi\|\big)K\|X\|_{p\text{-var};[s,t]}\exp\Big(C_p K^p \|X\|_{p\text{-var};[s,t]}^p\Big),
\label{eq:linearRDEbound}
\end{equation}
for all $0\leq s\leq t\leq T$.
\end{theorem}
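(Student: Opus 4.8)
The plan is to follow the classical rough-path route: reduce to purely linear vector fields, establish a sharp \emph{local} increment estimate on subintervals where $X$ has small $p$-variation, and then concatenate along a greedy partition of $[0,T]$, counting the pieces via the control-function property of $\omega(s,t):=\|X\|_{p\text{-var};[s,t]}^p$ (superadditive in the interval). As a first reduction I would homogenise away the affine part: setting $\widetilde Y_t=(Y_t,1)\in\R^{n+1}$ and $\widetilde A_i=\bigl(\begin{smallmatrix}A_i & B_i\\ 0 & 0\end{smallmatrix}\bigr)$, the equation becomes the purely linear RDE $\dby\widetilde Y_t=\widetilde f(\widetilde Y_t)\,\dby X_t$ with $\widetilde f_i(y)=\widetilde A_iy$, whose last coordinate is constant and with $\max_i\|\widetilde A_i\|\le K$. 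For a purely linear RDE the solution depends linearly on its starting value, so $\widetilde Y_t=M_{s,t}\widetilde Y_s$ for a matrix-valued multiplicative flow $M$, and since $\|\widetilde Y_s\|\asymp 1+\|Y_s\|$ everything reduces to bounding $\|M_{s,t}-I\|$.

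\textbf{Local estimate.} On an interval $[s,t]$ I would use the Taylor/Davie expansion of the RDE solution — the same expansion underlying the log-ODE estimate of Theorem~\ref{thm:logODEthm}, but now tracking the linear-growth constant $K$ instead of a $\mathrm{Lip}(\gamma)$ norm. Linearity of the $\widetilde f_i$ makes the degree-$k$ term of the expansion exactly $\widetilde A_{i_1}\cdots\widetilde A_{i_k}\,\pi_k(X_{s,t})$, so that, after summing to level $\floor p$ and isolating the remainder,
\[
\|M_{s,t}-I\|\;\le\;\sum_{k=1}^{\floor p}K^k\,\big\|X_{s,t}^{(k)}\big\|\;+\;R_{s,t},
\]
where $R_{s,t}$ is the expansion remainder. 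Choosing a threshold $\alpha>0$ so that $K\alpha$ equals a universal constant depending only on $p$, one obtains, whenever $\|X\|_{p\text{-var};[s,t]}\le\alpha$, the clean local bound $\|M_{s,t}-I\|\le C_p\,K\,\|X\|_{p\text{-var};[s,t]}$. This is where the genuine rough-path input is needed, and it is, I expect, the main obstacle: bounding $R_{s,t}$ uniformly in terms of $\|X\|_{p\text{-var};[s,t]}$ requires the sewing/almost-additivity argument together with the fact that $X$ is geometric, so that the truncated-signature increments $X^{(k)}_{s,t}$ obey Chen's relation and the scaling $\|X^{(k)}_{s,t}\|\lesssim\|X\|_{p\text{-var};[s,t]}^k$.

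\textbf{Partition and concatenation.} Since $\omega$ is a control, I would pick $0=\tau_0<\tau_1<\cdots<\tau_n=T$ greedily with $\|X\|_{p\text{-var};[\tau_j,\tau_{j+1}]}=\alpha$ for $j<n-1$ and $\le\alpha$ on the last piece; superadditivity of $\omega$ then bounds the number of pieces by $n\le 1+\alpha^{-p}\|X\|_{p\text{-var};[0,T]}^p=1+C_p\,K^p\,\|X\|_{p\text{-var};[0,T]}^p$. Using multiplicativity $M_{0,T}=M_{\tau_{n-1},T}\cdots M_{\tau_0,\tau_1}$ and the local bound $\|M_{\tau_j,\tau_{j+1}}\|\le 1+C_pK\alpha\le 1+C_p'$ on each piece,
\[
\|M_{0,T}\|\;\le\;(1+C_p')^{\,n}\;\le\;\exp\!\big(C_p''\,(1+K^p\|X\|_{p\text{-var};[0,T]}^p)\big).
\]
Repeating this on a subinterval $[s,t]$ and factoring one extra copy of $\|M_{\cdot,\cdot}-I\|\le C_pK\|X\|_{p\text{-var};[s,t]}$ off the first small piece (absorbing the remaining pieces into the exponential, via superadditivity of $\omega$) gives a bound of exactly the shape (\ref{eq:linearRDEbound}) after translating back from $M$, $\widetilde Y$ to $Y$ and using $\|\widetilde Y_s\|\asymp 1+\|Y_s\|$ together with the resulting global growth of $Y$ in $\|\xi\|$.

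\textbf{Existence and uniqueness.} On each small interval, existence and uniqueness follow from a Picard iteration carried out in the $p$-variation norm: because the vector fields are linear, the iteration map is an affine contraction once $\|X\|_{p\text{-var};[s,t]}\le\alpha$ (contraction constant $\le C_pK\alpha<1$). The a priori bound above prevents blow-up, so the local solutions patch to a unique global solution on $[0,T]$; alternatively, one multiplies each $f_i$ by a smooth cutoff to reduce to the $\mathrm{Lip}(\gamma)$ Universal Limit Theorem and then uses the estimate to send the cutoff radius to infinity.
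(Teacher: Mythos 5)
The paper does not prove this statement: Theorem~\ref{thm:linearexistance} is imported as a black box (Theorem 10.57 of \citet{friz2010multidimensional}), so there is no in-paper argument to compare yours against. Judged on its own merits, your outline is essentially the standard Friz--Victoir proof for linear RDEs and is sound in its main steps: the homogenisation $\widetilde Y=(Y,1)$ reducing the affine case to the purely linear one, the local Euler/Taylor estimate $\|M_{s,t}-I\|\le C_pK\|X\|_{p\text{-var};[s,t]}$ on intervals where $K\|X\|_{p\text{-var};[s,t]}$ is below a universal threshold (with the remainder controlled by sewing and the geometricity of $X$), the greedy partition counted via superadditivity of the control $\omega(s,t)=\|X\|^p_{p\text{-var};[s,t]}$, and multiplicativity of the flow. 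Handling existence and uniqueness by cutoff plus the a priori bound (or by Picard iteration on small intervals) is also the right move, since linear vector fields are unbounded and hence not $\mathrm{Lip}(\gamma)$, which precludes a direct appeal to the Universal Limit Theorem.

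One point deserves flagging, and it is precisely where your argument quietly diverges from the displayed inequality. Your concatenation bounds $\|Y_t-Y_s\|$ by $\|M_{s,t}-I\|\,\|\widetilde Y_s\|$ and must then control $1+\|Y_s\|$ by $(1+\|\xi\|)$ times an exponential of $K^p\omega(0,s)$ --- the ``resulting global growth'' you invoke at the end. The exponent you actually obtain is therefore $C_pK^p\|X\|^p_{p\text{-var};[0,t]}$, not $C_pK^p\|X\|^p_{p\text{-var};[s,t]}$ as written in (\ref{eq:linearRDEbound}). This is not a defect of your proof: the inequality with $[s,t]$ alone in the exponential is false as stated. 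Take $d=n=1$, $\dby Y=KY\,\dby X$ with $X$ smooth and increasing and $\xi=1$; then $Y_t-Y_s=e^{KX_{0,s}}\big(e^{KX_{s,t}}-1\big)$ grows without bound in $X_{0,s}$ while the right-hand side of (\ref{eq:linearRDEbound}) depends only on $X_{s,t}$. The original statement in \citet{friz2010multidimensional} carries the $p$-variation over the full interval inside the exponential, and that is the version your argument proves; the restriction to $[s,t]$ in the exponent here appears to be a transcription slip in the paper rather than a gap in your reasoning.
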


When the vector fields of the RDE (\ref{eq:RDE}) are linear, then the log-ODE (\ref{eq:standardlogode}) also becomes linear. Therefore, the log-ODE solution exists and is explicitly given as the exponential of the matrix $F$.

\begin{theorem}
Consider the same linear RDE on $[0,T]$ as in Theorem \ref{thm:linearexistance},
\begin{align*}
\dby Y_t & = f(Y_t)\,\dby X_t,\\
Y_0 & = \xi.\nonumber
\end{align*}
Then the log-ODE vector field $F$ given by (\ref{eq:logodevectfield}) is linear and the solution of the associated ODE (\ref{eq:standardlogode}) exists and satisfies
\begin{equation}
\|z_u^{s,t}\| \leq \|Y_s\|\exp\bigg(\sum_{m=1}^{\floor{\gamma}}K^m \Big\|\pi_m\Big(\logsig_{s,t}^{\floor{\gamma}}(X)\Big)\Big\|\bigg),
\label{eq:linearODEbound}
\end{equation}
for $u\in[0,1]$ and all $0\leq s\leq t\leq T$.
\end{theorem}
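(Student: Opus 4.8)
The plan is to exploit the fact that linearity of the vector fields is preserved by the iterated vector field construction of Definition \ref{def:vect_derivative}, so that the log-ODE (\ref{eq:standardlogode}) becomes a \emph{linear} ODE with constant coefficients. Such an ODE is solved by a matrix exponential, whose norm is controlled by the norm of the coefficient matrix, and that in turn is controlled by the $K^m$ together with the sizes of the log-signature levels — which is exactly the right-hand side of (\ref{eq:linearODEbound}).

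Although the statement allows the affine form $f_i(y) = A_i y + B_i$, it is convenient to treat first the homogeneous case $f_i(y) = A_i y$. An induction on $k$ using the recursion $f^{\circ(k+1)} = D(f^{\circ k})\,f$ shows that
\[
  f^{\circ k}(y)\big(e_{i_1}\otimes\cdots\otimes e_{i_k}\big) \;=\; A_{i_1}A_{i_2}\cdots A_{i_k}\, y ,
\]
since the base cases $k = 0,1$ are immediate and the inductive step merely differentiates a map that is linear in $y$. Hence each $f^{\circ k}(\cdot)$ is linear, and substituting into (\ref{eq:logodevectfield}) gives $F(z) = Mz$ with $M = \sum_{k=1}^{\floor{\gamma}} M_k$ and
\[
  M_k \;=\; \sum_{1\le i_1,\dots,i_k\le d} c^{(k)}_{i_1\cdots i_k}\, A_{i_1}\cdots A_{i_k},
\]
where $c^{(k)}_{i_1\cdots i_k}$ are the coordinates of $\pi_k\big(\logsig_{s,t}^{\floor{\gamma}}(X)\big)$ in the standard basis of $(\R^d)^{\otimes k}$. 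In particular $F$ is linear, proving the first assertion.

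Next I would solve and estimate. Equation (\ref{eq:standardlogode}) is now $\tfrac{\dby z}{\dby u} = Mz$ with $z_0^{s,t} = Y_s$, whose unique global solution is $z_u^{s,t} = \exp(uM)\,Y_s$; so for $u\in[0,1]$,
\[
  \big\|z_u^{s,t}\big\| \;\le\; \big\|\exp(uM)\big\|\,\|Y_s\| \;\le\; \exp\big(u\|M\|\big)\,\|Y_s\| \;\le\; \exp\big(\|M\|\big)\,\|Y_s\| .
\]
It remains to bound $\|M\|$. From $\max_i(\|A_i\|+\|B_i\|)\le K$ we have $\|A_i\|\le K$, hence $\|A_{i_1}\cdots A_{i_k}\|\le K^k$ by submultiplicativity; equivalently, regarding $f^{\circ k}(z)$ as a linear operator from $(\R^d)^{\otimes k}$ (with the admissible tensor norm of Theorem \ref{thm:logODEthm}) into $\R^n$, its operator norm is at most $K^k\|z\|$. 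Therefore $\|M_k z\| = \big\|f^{\circ k}(z)\,\pi_k(\logsig_{s,t}^{\floor{\gamma}}(X))\big\| \le K^k\,\|z\|\,\|\pi_k(\logsig_{s,t}^{\floor{\gamma}}(X))\|$, and summing over $k$ gives $\|M\| \le \sum_{m=1}^{\floor{\gamma}} K^m\,\|\pi_m(\logsig_{s,t}^{\floor{\gamma}}(X))\|$. Combining the last two displays yields (\ref{eq:linearODEbound}).

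The one point needing care — the main obstacle — is the norm bookkeeping: one must check that the tensor norm implicit in $\|\pi_m(\logsig)\|$ is compatible with bounding the contraction $f^{\circ m}(z)\,\pi_m(\logsig)$ by $K^m\|z\|\,\|\pi_m(\logsig)\|$. This is precisely where the admissibility property $\|a\otimes b\|\le\|a\|\|b\|$ assumed in Theorem \ref{thm:logODEthm} enters: expanding $f^{\circ m}(z)$ in the standard basis produces the iterated products $A_{i_1}\cdots A_{i_m}$, each of operator norm $\le K^m$, paired against the coordinates of $\pi_m(\logsig)$, and admissibility ensures the result is controlled by $K^m\|z\|\,\|\pi_m(\logsig)\|$ rather than by a basis-dependent $\ell^1$ surrogate. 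Finally, the genuinely affine case is recovered either by carrying the lower-order affine terms through the same induction, or — more cleanly — by appending a constant coordinate, which turns the affine RDE into a homogeneous linear one on $\R^{n+1}$ with the same bound $K$ on $\max_i\|A_i\|$, reducing the claim to the case already treated; the remaining ingredients (existence and uniqueness for a linear ODE, the matrix-exponential estimate) are standard.
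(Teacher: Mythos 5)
Your argument is the same one the paper uses: observe that the iterated vector fields $f^{\circ k}$ remain linear, so $F$ is a matrix $M$, solve the ODE as $z_u^{s,t}=\exp(uM)Y_s$, and conclude via $\|\exp(M)\|\le\exp(\|M\|)$ together with the bound $\|M\|\le\sum_{m}K^m\|\pi_m(\logsig_{s,t}^{\floor{\gamma}}(X))\|$. Your write-up actually supplies details the paper leaves implicit (the induction giving $f^{\circ k}$ as products of the $A_i$, the tensor-norm bookkeeping behind the estimate on $\|M\|$, and the homogenisation handling the affine terms), so it is correct and, if anything, more complete than the published proof.
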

\begin{proof}
Since $F$ is a linear vector field on $\R^n$, we can view it as an $n\times n$ matrix and so for $u\in[0,1]$,
\begin{equation}
z_u^{s,t} = \exp(uF)z_0^{s,t},\nonumber
\end{equation}

where $\exp$ denotes the matrix exponential. The result now follows by the standard estimate $\|\exp(F)\| \leq \exp(\|F\|)$.
\end{proof}
\begin{remark}\label{rmk:linear_rmk}
Due to the boundedness of linear RDEs (\ref{eq:linearRDEbound}) and log-ODEs (\ref{eq:linearODEbound}), the arguments that established Theorem \ref{thm:logODEthm} will hold in the linear setting as $\|f\|_{\mathrm{Lip}(\gamma)}$ would be finite when defined on the domains that the solutions $Y$ and $z$ lie in.
\end{remark}

Given the local error estimate (\ref{eq:local_logodeestimate}) for the log-ODE method, we can now consider the approximation error that is exhibited by a log-ODE numerical solution to the RDE (\ref{eq:RDE}). Thankfully, the analysis required to derive such global error estimates was developed by Greg Gyurk\'{o} in his PhD thesis. 
Thus the following result is a straightforward application of Theorem 3.2.1 from \citet{gyurko2008thesis}.\medbreak

\begin{theorem} Let $X$, $f$ and $Y$ satisfy the assumptions given by Theorem \ref{thm:logODEthm} and suppose that $\{0 = t_0 < t_1 < \cdots < t_N = T\}$ is a partition of $[0,T]$ with $\max_{\,k}\|X\|_{p\text{-var};[t_k,t_{k+1}]}$ sufficiently small. We can construct a numerical solution $\{Y_k^{\log}\}_{0\leq k \leq N}$ of (\ref{eq:RDE}) by setting $Y_0^{\log} := Y_0$ and for each $k \in \{0, 1, \cdots, N - 1\}$, defining $Y_{k+1}^{\log}$ to be the solution at $u=1$ of the following ODE:
\begin{align}
\frac{\dby z^{t_k,t_{k+1}}}{\dby u} & := F\big(z^{t_k,t_{k+1}}\big),\label{eq:standardlogode2}\\
z_0^{t_k,t_{k+1}} & := Y_k^{\log},\nonumber
\end{align}
where the vector field $F$ is constructed from the log-signature of $X$ over the interval $[t_k, t_{k+1}]$ according to (\ref{eq:logodevectfield}). Then there exists a constant $C$ depending only on $p$, $\gamma$ and $\|f\|_{\mathrm{Lip}(\gamma)}$ such that
\begin{equation}
\big\|Y_{t_k} - Y_k^{\log}\big\| \leq C\sum_{i=0}^{k-1}\|X\|_{p\text{-var};[t_i,t_{i+1}]}^\gamma,
\label{eq:global_logodeestimate}
\end{equation}
for $0\leq k\leq N$.
\end{theorem}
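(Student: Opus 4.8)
The plan is to follow the standard ``local error plus stability implies global error'' template for one-step numerical schemes for rough differential equations, which is precisely the abstract machinery of Theorem 3.2.1 in \citet{gyurko2008thesis}. The two ingredients to feed into it are the one-step estimate of Theorem \ref{thm:logODEthm} and the Lipschitz continuity of the It\^o--Lyons solution map supplied by the Universal Limit Theorem (Theorem 5.3 in \citet{roughpath2007notes}).

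First I would record the one-step error. Writing $\Phi_{s,t}$ for the exact RDE flow from time $s$ to time $t$, applying Theorem \ref{thm:logODEthm} on each subinterval $[t_i, t_{i+1}]$ but started from the numerical value $Y_i^{\log}$ rather than the exact solution gives
\begin{equation*}
\big\| \Phi_{t_i,t_{i+1}}\big(Y_i^{\log}\big) - Y_{i+1}^{\log}\big\| \leq C_{p,\gamma}\|f\|_{\mathrm{Lip}(\gamma)}^\gamma\,\|X\|_{p\text{-var};[t_i,t_{i+1}]}^\gamma .
\end{equation*}
Setting $\omega(s,t) := \|X\|_{p\text{-var};[s,t]}^p$, which is a superadditive control, the right-hand side equals $\omega(t_i,t_{i+1})^{\gamma/p}$, and since $\gamma > p \geq 1$ the exponent $\gamma/p$ exceeds $1$: the one-step error is genuinely higher order, which is what makes the telescoped sum converge.

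Next I would telescope the global error along the exact flow. Using $\Phi_{t_{i+1},t_k}\circ\Phi_{t_i,t_{i+1}} = \Phi_{t_i,t_k}$ and $Y_0^{\log}=Y_{t_0}$,
\begin{equation*}
Y_{t_k} - Y_k^{\log} = \sum_{i=0}^{k-1}\Big(\Phi_{t_{i+1},t_k}\big(\Phi_{t_i,t_{i+1}}(Y_i^{\log})\big) - \Phi_{t_{i+1},t_k}\big(Y_{i+1}^{\log}\big)\Big).
\end{equation*}
Bounding each summand by $\mathrm{Lip}(\Phi_{t_{i+1},t_k})$ times the one-step error at step $i$, and using that the Universal Limit Theorem gives $\mathrm{Lip}(\Phi_{s,t}) \leq \exp\!\big(C_{p,\gamma}\|f\|_{\mathrm{Lip}(\gamma)}^p\,\omega(s,t)\big)$ together with superadditivity $\omega(t_{i+1},t_k)\leq\omega(0,T)$, every such Lipschitz constant is at most the global constant $\exp\!\big(C_{p,\gamma}\|f\|_{\mathrm{Lip}(\gamma)}^p\,\omega(0,T)\big)$. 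Summing then yields $\|Y_{t_k}-Y_k^{\log}\| \leq C\sum_{i=0}^{k-1}\|X\|_{p\text{-var};[t_i,t_{i+1}]}^\gamma$ with $C$ depending only on $p$, $\gamma$ and $\|f\|_{\mathrm{Lip}(\gamma)}$, as claimed.

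Two points need care, and the first is the main obstacle. To apply Theorem \ref{thm:logODEthm} at each step with the \emph{perturbed} initial condition $Y_i^{\log}$ one needs an a priori bound ensuring that the whole numerical trajectory $\{Y_k^{\log}\}$ stays inside a fixed ball on which $f$ keeps its $\mathrm{Lip}(\gamma)$ norm; this is exactly where the hypothesis that $\max_k\|X\|_{p\text{-var};[t_k,t_{k+1}]}$ is sufficiently small enters, allowing the a priori bound and the telescoping estimate to be closed simultaneously by induction on $k$. The second point is the linear vector field case, where $f$ is unbounded and $\|f\|_{\mathrm{Lip}(\gamma)}=\infty$: here I would invoke Remark \ref{rmk:linear_rmk}, using the explicit growth estimates in \eqref{eq:linearRDEbound} and \eqref{eq:linearODEbound} to confine both $Y$ and each $z^{t_i,t_{i+1}}$ to a compact set determined by $\|\xi\|$, $K$ and $\|X\|_{p\text{-var};[0,T]}$, and then running the bounded-vector-field argument above with $f$ replaced by a smooth compactly-supported modification that agrees with it on that set.
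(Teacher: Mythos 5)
Your proposal is correct and follows essentially the same route as the paper, which simply invokes Theorem 3.2.1 of Gyurk\'{o}'s thesis --- i.e.\ exactly the local-error-plus-flow-stability telescoping (Lady Windermere's fan) that you spell out, with the one-step estimate supplied by Theorem \ref{thm:logODEthm}, the ``sufficiently small'' hypothesis entering through the a priori confinement of the numerical trajectory, and the linear case handled via Remark \ref{rmk:linear_rmk}. The only caveat worth recording is that the Lipschitz constants of the flow maps $\Phi_{t_{i+1},t_k}$ obtained from the Universal Limit Theorem grow like $\exp\bigl(C_{p,\gamma}\|f\|_{\mathrm{Lip}(\gamma)}^{p}\|X\|_{p\text{-var};[0,T]}^{p}\bigr)$, so the constant your argument actually produces also depends on $\|X\|_{p\text{-var};[0,T]}$ --- a dependence that the theorem statement (and its source) elides rather than one your proof fails to control.
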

\begin{remark}
The above error estimate also holds when the vector field $f$ is linear (by Remark \ref{rmk:linear_rmk})).
\end{remark}

Since $\floor{\gamma}$ is the truncation depth of the log-signatures used to construct each log-ODE vector field, we see that high convergence rates can be achieved through using more terms in each log-signature.
It is also unsurprising that the error estimate (\ref{eq:global_logodeestimate}) increases with the ``roughness'' of the control path.
So just as in our experiments, we see that the performance of the log-ODE method can be improved by choosing an appropriate step size and depth of log-signature.

\section{Experimental details} \label{apx:experiments}

\paragraph{Code} The code to reproduce the experiments is available at \url{https://github.com/jambo6/neuralRDEs}. 

\paragraph{Data splits} Each dataset was split into a training, validation, and testing dataset with relative sizes 70\%/15\%/15\%.

\paragraph{Hyperparameter selection} Hyperparameters were selected for the Neural CDE model by performing a grid search, with a step size chosen so that the length of the sequence was 500 steps. This was found to create a reasonable balance between training time and sequence length. We additionally performed a separate hyperparameter selection for the ODE-RNN model. The Neural RDE models then use the same hyperparameters as the Neural CDE model.

\paragraph{Normalisation} The training splits of each dataset were normalised to zero mean and unit variance. The statistics from the training set were then used to normalise the validation and testing datasets.

\paragraph{Architecture} We give a graphical description of the architecture used for updating the Neural CDE hidden state in figure \ref{fig:network_diagram}. The input is first run through a multilayer perceptron with $n$ layers of size $h$, with with $n, h$ being hyperparameters. ReLU nonlinearities are used at each layer except the final one, where we instead use a tanh nonlinearity. The goal of this is to help prevent term blow-up over the long sequences.

Note that this is a small inconsistency between this work and the original model proposed in \citet{kidger2020neural}. Here, we applied the tanh function as the final hidden layer nonlinearity, whilst in the original paper the tanh nonlinearity is applied after the final linear map. Both methods are used to constrain the rate of change of the hidden state; we do not know of a reason to prefer one over the other.

Note that the final linear layer in the multilayer perceptron is reshaped to produce a matrix-valued output, of shape $v \times p$. (As $\widehat{f}_\theta$ is matrix-valued.) A matrix-vector multiplication with the log-signature then produces the vector field for the ODE solver.

\begin{figure*}
    \centering
    \resizebox{\textwidth}{!}{%
    
    \begin{tikzpicture}
        \node[input, rotate=90, minimum width=2.8cm] (input) at (0, 0) {Input, $Z_{r_i}$};
        \node[hidden, rotate=90, minimum width=5cm] (hidden1) at (2,0) {Hidden layer 1};
        \node[hidden, rotate=90, minimum width=5cm] (hidden_end) at (6,0) {Hidden layer n};
        \node[hidden_square, minimum width=4cm] (hidden_square) at (10,0) {$\hat{f}_\theta(Z_{r_i})$};
        \node[logsig, rotate=90, minimum width=3.5cm] (logsig) at (12.6, 0) {$\logsig_{r_i, r_{i+1}}$};
        \node[input, rotate=90, minimum width=2.8cm] (output) at (15.5, 0) {Output, $Z_{r_{i+1}}$};
        
        \node[below, rotate=90] at (input.north east) [xshift=-2.4ex, yshift=0.4ex] {\footnotesize $v \times 1$};
        \node[below, rotate=90] at (hidden1.north east) [xshift=-2.4ex, yshift=0.4ex] {\footnotesize $h \times 1$};
        \node[below, rotate=90] at (hidden_end.north east) [xshift=-2.4ex, yshift=0.4ex] {\footnotesize $h \times 1$};
        \node[below, rotate=0] at (hidden_square.north east) [xshift=-2.4ex, yshift=0.4ex] {\footnotesize $v \times p$};
        \node[below, rotate=90] at (logsig.north east) [xshift=-2.4ex, yshift=0.4ex] {\footnotesize $p \times 1$};
        \node[below, rotate=90] at (output.north east) [xshift=-2.4ex, yshift=0.4ex] {\footnotesize $v \times 1$};

        \node (dots) at (4, 0) {\ldots};
        
        \draw[middlearrow={>}] (12.6, -3) -- (logsig) node[midway, above, rotate=90] {Logsig} node[midway, below, rotate=90] {factor}; 
        
        \draw[middlearrow={>}] (input) -- (hidden1) node[midway, above] {ReLU};
        \draw[middlearrow={>}] (hidden1) -- (dots) node[midway, above] {ReLU};
        \draw[middlearrow={>}] (dots) -- (hidden_end) node[midway, above] {Tanh};
        \draw[middlearrow={>}] (hidden_end) -- (hidden_square) node[midway, above] {Linear} node[midway, below] {+ reshape};
        \draw[middlearrow={>}] (logsig) -- (output) node[midway, above] {ODE Solve};
    
        \draw[decoration={brace,raise=5pt},decorate] (8, 1.8) -- node[above=6pt] {Matrix multiplication} (13, 1.8);
        \draw[decoration={brace,raise=5pt},decorate] (2, 2.5) -- node[above=6pt] {n layers} (6, 2.5);
        \draw[decoration={brace,mirror,raise=5pt},decorate] (0.5, -2.5) -- node[below=6pt] {$\hat{f}_\theta$} (8, -2.5);

    \end{tikzpicture}
}
    \caption{Overview of the hidden state update network structure. We give the dimensions at each layer in the top right hand corner of each box.} 
    \label{fig:network_diagram}
\end{figure*}
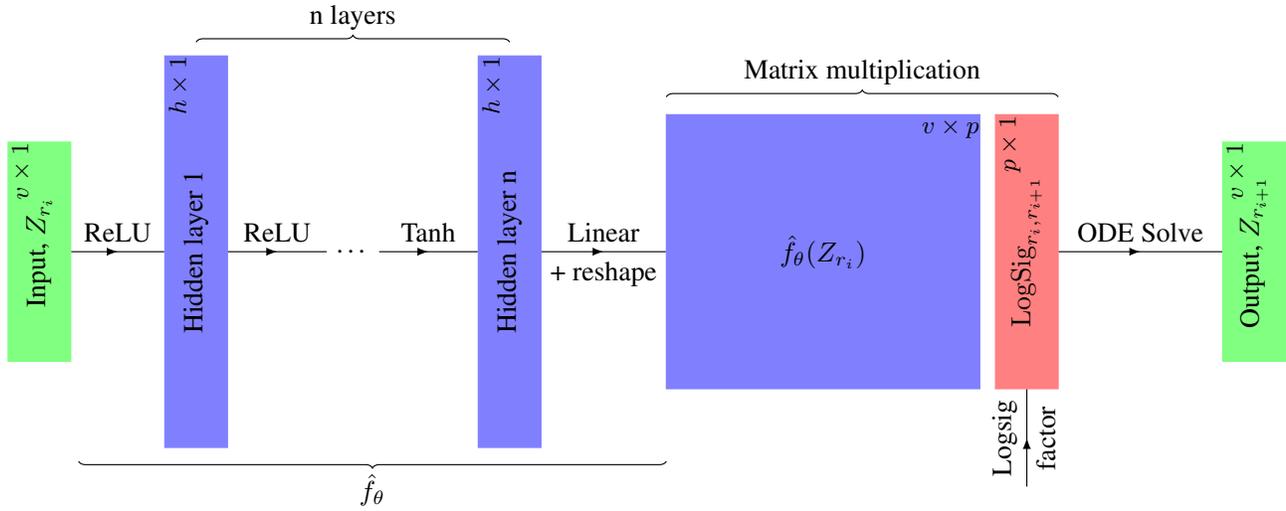

\paragraph{ODE Solver} All problems used the `rk4' solver as implemented by \texttt{torchdiffeq} \citep{torchdiffeq} version 0.0.1.

\paragraph{Computing infrastructure} All EigenWorms experiments were run on a computer equipped with three GeForce RTX 2080 Ti's. All BIDMC experiments were run on a computed with two GeForce RTX 2080 Ti's and two Quadro GP100's.

\paragraph{Optimiser} All experiments used the Adam optimiser. The learning rate was initialised at $0.032$ divided by batch size. The batch size used was 1024 for EigenWorms and 512 for the BIDMC problems. If the validation loss failed to decrease after 15 epochs the learning rate was reduced by a factor of 10. If the validation loss did not decrease after 60 epochs, training was terminated and the model was rolled back to the point at which it achieved the lowest loss on the validation set. 

\paragraph{Hyperparameter selection} Hyperparameters were selected to optimise the score of the NCDE$_1$ model on the validation set. For each dataset the search was performed with a step size that meant the total number of hidden state updates was equal to 500, as this represented a good balance between length and speed that allowed us to complete the search in a reasonable time-frame. In particular, this was short enough that we could train using the non-adjoint training method which helped to speed this section up. The hyperparameters that were considered were:
\begin{itemize}
    \item Hidden dimension: [16, 32, 64] - The dimension of the hidden state $Z_t$.
    \item Number of layers: [2, 3, 4] - The number of hidden state layers.
    \item Hidden hidden multiplier: [1, 2, 3] - Multiplication factor for the hidden hidden state, this being the `Hidden layer $k$' in figure \ref{fig:network_diagram}. The dimension of each of these `hidden hidden' layers with be this value multiplied by `Hidden dimension'.
\end{itemize}
We ran each of these 27 total combinations for every dataset and the parameters that corresponded were used as the parameters when training over the full depth and step grid. The full results from the hyperparameter search are listed in tables (\ref{tab:eigenworms_hyper}, \ref{tab:bidmc_hyper}) with bolded values to show which values were eventually selected.

\begin{table*}[t]
    \scriptsize
    \begin{center}
        \begin{tabular}{ccccc}
        \toprule
        \textbf{Validation accuracy} & \textbf{Hidden dim} & \textbf{Num layers} & \textbf{Hidden hidden multiplier} & \textbf{Total params} \\
        \midrule
        33.3 &         16 &                        2 &          3 &        5509 \\
        43.6 &         16 &                        2 &          2 &        5509 \\
        56.4 &         16 &                        2 &          1 &        4453 \\
        64.1 &         16 &                        3 &          2 &        8869 \\
        38.5 &         16 &                        3 &          3 &        8869 \\
        51.3 &         16 &                        3 &          1 &        6517 \\
        82.1 &         16 &                        4 &          2 &       12741 \\
        35.9 &         16 &                        4 &          3 &       12741 \\
        53.8 &         16 &                        4 &          1 &        8581 \\
        35.9 &         32 &                        2 &          3 &       21253 \\
        74.4 &         32 &                        2 &          2 &       21253 \\
        43.6 &         32 &                        2 &          1 &       17093 \\
        53.8 &         32 &                        3 &          3 &       34629 \\
        \textbf{87.2} &         \textbf{32} & \textbf{3} & \textbf{2} & \textbf{34629} \\
        64.1 &         32 &                        3 &          1 &       25317 \\
        35.9 &         32 &                        4 &          3 &       50053 \\
        71.8 &         32 &                        4 &          1 &       33541 \\
        79.5 &         32 &                        4 &          2 &       50053 \\
        41.0 &         64 &                        2 &          3 &       83461 \\
        64.1 &         64 &                        2 &          2 &       83461 \\
        48.7 &         64 &                        3 &          3 &      136837 \\
        59.0 &         64 &                        3 &          2 &      136837 \\
        51.3 &         64 &                        2 &          1 &       66949 \\
        56.4 &         64 &                        4 &          2 &      198405 \\
        64.1 &         64 &                        4 &          3 &      198405 \\
        64.1 &         64 &                        3 &          1 &       99781 \\
        51.3 &         64 &                        4 &          1 &      132613 \\
        \bottomrule
        \end{tabular}
    \end{center}
    \caption{Hyperparamter selection results for the EigenWorms dataset. The blue values denote the selected hyperparameters.}
    \label{tab:eigenworms_hyper}
\end{table*} 

\begin{table*}[t]
    \scriptsize
    \begin{center}
        \begin{tabular}{ccc}
        \toprule
        \textbf{Validation accuracy} & \textbf{Hidden dim} & \textbf{Total params} \\
        \midrule
        61.5 &               32 &       11299 \\
        53.8 &               64 &       24611 \\
        \textbf{64.1} &              \textbf{128} &       \textbf{57379} \\
        59.0 &              192 &       98339 \\
        61.5 &              256 &      147491 \\
        59.0 &              320 &      204835 \\
        64.1 &              388 &      274739 \\
        \bottomrule
        \end{tabular}
    \end{center}
    \caption{Hyperparamter selection results for the ODE-RNN model on the EigenWorms dataset}
    \label{tab:eigenworms_hyper_odernn}
\end{table*}

\begin{table*}[t]
    \scriptsize
    \begin{center}
        \begin{tabular}{ccccccc}
        \toprule
        \multicolumn{3}{c}{\textbf{Validation loss}} & \multirow{2}{*}{\textbf{Hidden dim}} & \multirow{2}{*}{\textbf{Num layers}} & \multirow{2}{*}{\textbf{Hidden hidden multiplier}} & \multirow{2}{*}{\textbf{Total params}} \\
        \cmidrule(lr){1-3}
        RR & HR & SpO2 & & & & \\
        \midrule
        1.72 &      6.10 &      2.07 &         16 &                        2 &          1 &        2209 \\
        1.57 &      5.58 &      1.97 &         16 &                        2 &          2 &        3265 \\
        1.55 &      6.10 &      1.33 &         16 &                        2 &          3 &        3265 \\
        1.80 &      5.16 &      2.05 &         16 &                        3 &          1 &        3249 \\
        1.61 &      5.22 &      1.62 &         16 &                        3 &          2 &        5601 \\
        1.56 &      3.34 &      1.18 &         16 &                        3 &          3 &        5601 \\
        1.57 &      3.86 &      1.97 &         16 &                        4 &          1 &        4289 \\
        1.45 &      3.54 &      1.25 &         16 &                        4 &          2 &        8449 \\
        1.54 &      3.93 &      1.09 &         16 &                        4 &          3 &        8449 \\
        1.56 &      6.81 &      1.87 &         32 &                        2 &          1 &        8513 \\
        1.42 &      3.11 &      1.43 &         32 &                        2 &          2 &       12673 \\
        1.54 &      3.60 &      1.11 &         32 &                        2 &          3 &       12673 \\
        1.54 &      3.52 &      1.57 &         32 &                        3 &          1 &       12641 \\
        1.39 &      2.96 &      1.03 &         32 &                        3 &          2 &       21953 \\
        1.47 &      2.95 &      1.05 &         32 &                        3 &          3 &       21953 \\
        1.55 &      3.00 &      2.00 &         32 &                        4 &          1 &       16769 \\
        1.38 &      3.20 &      1.07 &         32 &                        4 &          2 &       33281 \\
        1.43 &      2.58 &      1.01 &         32 &                        4 &          3 &       33281 \\
        1.51 &      3.21 &      1.10 &         64 &                        2 &          1 &       33409 \\
        1.43 &     \textbf{2.22} & 1.00 &         \textbf{64} &                        \textbf{2} &          \textbf{2} &       \textbf{49921} \\
        1.51 &      3.34 &      0.94 &         64 &                        2 &          3 &       49921 \\
        1.55 &      3.24 &      2.09 &         64 &                        3 &          1 &       49857 \\
        1.32 &      2.53 &      0.88 &         64 &                        3 &          2 &       86913 \\
        \textbf{1.25} &      2.57 &      \textbf{0.73} &         \textbf{64} &                        \textbf{3} &         \textbf{3} &       \textbf{86913} \\
        1.43 &      5.78 &      1.43 &         64 &                        4 &          1 &       66305 \\
        1.28 &      2.26 &      0.93 &         64 &                        4 &          2 &      132097 \\
        1.32 &      2.46 &      1.15 &         64 &                        4 &          3 &      132097 \\
        \bottomrule
        \end{tabular}
    \end{center}
    \caption{Hyperparameter selection results for each problem of the BIDMC dataset. The bold values denote the selected hyperparameters for each vitals sign problem. Note that RR and SpO2 had the same parameters selected, hence why only two lines are given in bold.}
    \label{tab:bidmc_hyper}
\end{table*}

\begin{table*}[t]
    \scriptsize
    \begin{center}
        \begin{tabular}{ccccc}
        \toprule
        \multicolumn{3}{c}{\textbf{Validation loss}} & \multirow{2}{*}{\textbf{Hidden dim}} & \multirow{2}{*}{\textbf{Total params}} \\
        \cmidrule(lr){1-3}
        RR & HR & SpO2 & & \\
        \midrule
        3.00 &     \textbf{12.82} &     \textbf{3.37} &               32 &        3871 \\
        3.00 &     12.82 &      3.37 &               64 &        9759 \\
        2.82 &     12.82 &      3.37 &              128 &       27679 \\
        \textbf{2.49} &     12.82 &      3.37 &              192 &       53791 \\
        2.52 &     12.82 &      3.37 &              256 &       88095 \\
        2.50 &     12.82 &      3.37 &              320 &      130591 \\
        2.83 &     12.82 &      3.37 &              388 &      184719 \\
        \bottomrule
        \end{tabular}
    \end{center}
    \caption{Hyperparameter selection results for the folded ODE-RNN model on the BIDMC problem. Bold values indicate selected hyperparamter values. The ODE-RNN model failed to train effectively for the HR and SpO2 problems which is why the validation losses are the same (to 2dp).}
    \label{tab:bidmc_hyper_odernn}
\end{table*}

\section{Experimental Results} \label{apx:results}
Here we include the full breakdown of all experimental results. Tables \ref{tab:eigenworms_all} and \ref{tab:bidmc_all} include all results from the EigenWorms and BIDMC datasets respectively.

\begin{table*}[t]
    \small
    \begin{center}
        \begin{tabular}{ccccc}
        \toprule
        \textbf{Model} & \textbf{Step} & \textbf{Test Accuracy} & \textbf{Time (Hrs)} & \textbf{Memory (Mb)} \\
        \midrule
          & 1    &  Memory Error & Memory Error & Memory Error \\
          & 2    &   36.8 $\pm$ 1.5 &           1.6 &        7170.1 \\
          & 4    &   35.0 $\pm$ 1.5 &           0.8 &        3629.3 \\
          & 6    &   36.8 $\pm$ 1.5 &           0.5 &        2448.6 \\
          & 8    &   36.8 $\pm$ 1.5 &           0.4 &        1858.8 \\
          & 16   &   32.5 $\pm$ 3.0 &           0.2 &         973.5 \\
        ODE-RNN  & 32   &   32.5 $\pm$ 1.5 &           0.1 &         532.2 \\
        (folded)  & 64   &   41.0 $\pm$ 4.4 &           0.1 &         311.2 \\
          & 128  &   47.9 $\pm$ 5.3 &           0.0 &         200.8 \\
          & 256  &   46.2 $\pm$ 0.0 &           0.0 &         147.0 \\
          & 512  &  47.9 $\pm$ 10.4 &           0.0 &         124.5 \\
          & 1024 &   44.4 $\pm$ 7.4 &           0.0 &         122.4 \\
          & 2048 &   48.7 $\pm$ 6.8 &           0.0 &         137.2 \\
        
        \midrule
          & 1    &  62.4 $\pm$ 12.1 &          22.0 &         176.5 \\
          & 2    &   69.2 $\pm$ 4.4 &          14.6 &          90.6 \\
          & 4    &  66.7 $\pm$ 11.8 &           5.5 &          46.6 \\
          & 6    &  65.8 $\pm$ 12.9 &           2.6 &          31.5 \\
          & 8    &  64.1 $\pm$ 13.3 &           3.1 &          24.3 \\
          & 16   &  64.1 $\pm$ 16.8 &           1.5 &          13.4 \\
        \multirow{2}{*}{NCDE}  & 32   &  64.1 $\pm$ 14.3 &           0.5 &           8.0 \\
          & 64   &   56.4 $\pm$ 6.8 &           0.4 &           5.2 \\
          & 128  &   48.7 $\pm$ 2.6 &           0.1 &           3.9 \\
          & 256  &   42.7 $\pm$ 3.0 &           0.1 &           3.2 \\
          & 512  &   44.4 $\pm$ 5.3 &           0.0 &           2.9 \\
          & 1024 &  41.9 $\pm$ 14.6 &           0.0 &           2.7 \\
          & 2048 &   38.5 $\pm$ 5.1 &           0.0 &           2.6 \\
          
        \midrule
          & 2    &  \textbf{76.1 $\pm$ 13.2} &           9.8 &         354.3 \\
          & 4    &   \textbf{83.8 $\pm$ 3.0} &           2.4 &         180.0 \\
          & 6    &   \textbf{76.9 $\pm$ 6.8} &           2.0 &          82.2 \\
          & 8    &   \textbf{77.8 $\pm$ 5.9} &           2.1 &          94.2 \\
          & 16   &   \textbf{78.6 $\pm$ 3.9} &           1.3 &          50.2 \\
        NRDE$_2$ & 32   &  67.5 $\pm$ 12.1 &           0.7 &          28.1 \\
          & 64   &   73.5 $\pm$ 7.8 &           0.4 &          17.2 \\
          & 128  &   \textbf{76.1 $\pm$ 5.9} &           0.2 &           7.8 \\
          & 256  &  \textbf{72.6 $\pm$ 12.1} &           0.1 &           8.9 \\
          & 512  &  \textbf{69.2 $\pm$ 11.8} &           0.0 &           7.6 \\
          & 1024 &   \textbf{65.0 $\pm$ 7.4} &           0.0 &           6.9 \\
          & 2048 &   \textbf{67.5 $\pm$ 3.9} &           0.0 &           6.5 \\
          
        \hdashline\noalign{\vskip 0.5ex}
          & 2    &   66.7 $\pm$ 4.4 &           7.4 &        1766.2 \\
          & 4    &   76.9 $\pm$ 9.2 &           2.8 &         856.8 \\
          & 6    &   70.9 $\pm$ 1.5 &           1.4 &         606.1 \\
          & 8    &   70.1 $\pm$ 6.5 &           1.3 &         460.7 \\
          & 16   &   73.5 $\pm$ 3.0 &           1.4 &         243.7 \\
        NRDE$_3$ & 32   &   \textbf{75.2 $\pm$ 3.0} &           0.6 &         134.7 \\
          & 64   &  \textbf{74.4 $\pm$ 11.8} &           0.3 &          81.0 \\
          & 128  &   68.4 $\pm$ 8.2 &           0.1 &          53.3 \\
          & 256  &   60.7 $\pm$ 8.2 &           0.1 &          40.2 \\
          & 512  &  62.4 $\pm$ 10.4 &           0.0 &          33.1 \\
          & 1024 &   59.8 $\pm$ 3.9 &           0.0 &          29.6 \\
          & 2048 &   61.5 $\pm$ 4.4 &           0.0 &          27.7 \\
        \bottomrule
        \end{tabular}
    \end{center}
    \caption{Mean and standard deviation of test set accuracy (in \%) over three repeats, as well as memory usage and training time, on the EigenWorms dataset for depths 1--3 and a small selection of step sizes. The bold values denote that the model was the top performer for that step size.}
    \label{tab:eigenworms_all}
\end{table*}

\begin{table*}[t]
    \small
    \begin{center}
        \begin{tabular}{ccccccccc}
        \toprule
        \multirow{2}{*}{\textbf{Depth}} & \multirow{2}{*}{\textbf{Step}} & \multicolumn{3}{c}{$\mathbf{L^2}$} & \multicolumn{3}{c}{\textbf{Time (H)}} & \multirow{2}{*}{\textbf{Memory (Mb)}} \\
        \cmidrule(lr){3-5} \cmidrule(lr){6-8}
        & & RR & HR & SpO$_2$ & RR & HR & SpO$_2$ & \\
        \midrule
         & 1    &    Error &  13.06 $\pm$ 0.0 & Error &           Error &          10.4 &           Error &             3654.0 \\
          & 2    &    Error &  13.06 $\pm$ 0.0 &  Error &           Error &           5.5 &           Error &             1840.4 \\
          & 4    &  2.76 $\pm$ 0.14 &  13.06 $\pm$ 0.0 &    3.3 $\pm$ 0.0 &           3.0 &           2.7 &           2.1 &             1809.0 \\
          & 8    &  2.47 $\pm$ 0.35 &  13.06 $\pm$ 0.0 &    3.3 $\pm$ 0.0 &           1.5 &           1.2 &           0.9 &              917.2 \\
          & 16   &  2.21 $\pm$ 0.75 &  13.06 $\pm$ 0.0 &    3.3 $\pm$ 0.0 &           2.2 &           0.7 &           0.4 &              471.9 \\
        ODE-RNN   & 32   &  1.82 $\pm$ 0.64 &  13.06 $\pm$ 0.0 &    3.3 $\pm$ 0.0 &           0.7 &           0.3 &           0.2 &              249.4 \\
        (folded)  & 64   &   1.6 $\pm$ 0.22 &  13.06 $\pm$ 0.0 &    3.3 $\pm$ 0.0 &           0.5 &           0.1 &           0.1 &              137.0 \\
          & 128  &  1.62 $\pm$ 0.07 &  13.06 $\pm$ 0.0 &    3.3 $\pm$ 0.0 &           0.2 &           0.1 &           0.1 &               81.9 \\
          & 256  &  1.57 $\pm$ 0.04 &  7.04 $\pm$ 1.04 &  \textbf{1.43 $\pm$ 0.11} &           0.1 &           0.1 &           0.1 &               53.8 \\
          & 512  &  1.66 $\pm$ 0.06 &   6.75 $\pm$ 0.9 &  1.98 $\pm$ 0.31 &           0.0 &           0.1 &           0.1 &               40.4 \\
          & 1024 &  \textbf{1.69 $\pm$ 0.02} &   8.4 $\pm$ 0.28 &  2.05 $\pm$ 0.14 &           0.0 &           0.0 &           0.0 &               36.2 \\
          & 2048 &  1.75 $\pm$ 0.03 &   9.2 $\pm$ 0.27 &  \textbf{2.24 $\pm$ 0.11} &           0.0 &           0.0 &           0.0 &               39.6 \\
        \midrule
          & 1    &  2.79 $\pm$ 0.04 &   9.82 $\pm$ 0.34 &  2.83 $\pm$ 0.27 &          23.8 &          22.1 &          28.1 &               56.5 \\
          & 2    &  2.87 $\pm$ 0.03 &  11.69 $\pm$ 0.38 &   \textbf{3.36 $\pm$ 0.2} &          19.3 &           9.6 &           8.8 &               32.6 \\
          & 4    &  \textbf{2.92 $\pm$ 0.08} &  11.15 $\pm$ 0.49 &  3.69 $\pm$ 0.06 &           5.3 &           5.7 &           3.2 &               20.2 \\
          & 8    &   2.8 $\pm$ 0.06 &  10.72 $\pm$ 0.24 &  3.43 $\pm$ 0.17 &           3.0 &           2.6 &           4.8 &               14.3 \\
          & 16   &  2.22 $\pm$ 0.07 &   7.98 $\pm$ 0.61 &   2.9 $\pm$ 0.11 &           1.7 &           1.4 &           1.8 &               11.8 \\
        \multirow{2}{*}{NCDE}  & 32   &  2.53 $\pm$ 0.23 &  12.23 $\pm$ 0.43 &  2.68 $\pm$ 0.12 &           1.9 &           0.9 &           2.2 &                9.8 \\
          & 64   &  2.63 $\pm$ 0.11 &  12.02 $\pm$ 0.09 &  2.88 $\pm$ 0.06 &           0.2 &           0.3 &           0.4 &                9.1 \\
          & 128  &  2.64 $\pm$ 0.18 &  11.98 $\pm$ 0.37 &  2.86 $\pm$ 0.04 &           0.2 &           0.2 &           0.3 &                8.7 \\
          & 256  &  2.53 $\pm$ 0.04 &   12.29 $\pm$ 0.1 &   3.08 $\pm$ 0.1 &           0.1 &           0.1 &           0.1 &                8.3 \\
          & 512  &  2.53 $\pm$ 0.03 &  12.22 $\pm$ 0.11 &  2.98 $\pm$ 0.04 &           0.1 &           0.0 &           0.1 &                8.4 \\
          & 1024 &  2.67 $\pm$ 0.12 &  11.55 $\pm$ 0.03 &  2.91 $\pm$ 0.12 &           0.1 &           0.1 &           0.1 &                8.4 \\
          & 2048 &  2.48 $\pm$ 0.03 &   12.03 $\pm$ 0.2 &  3.25 $\pm$ 0.01 &           0.0 &           0.1 &           0.0 &                8.2 \\
        \midrule
          & 2    &   2.91 $\pm$ 0.1 &  11.11 $\pm$ 0.23 &  3.89 $\pm$ 0.44 &          12.7 &           9.3 &           8.2 &               58.3 \\
          & 4    &  \textbf{2.92 $\pm$ 0.04} &   11.14 $\pm$ 0.2 &  4.23 $\pm$ 0.57 &          18.1 &           5.0 &           3.4 &               34.0 \\
          & 8    &  2.63 $\pm$ 0.12 &   8.63 $\pm$ 0.24 &  2.88 $\pm$ 0.15 &           2.1 &           3.4 &           3.3 &               21.8 \\
          & 16   &   1.8 $\pm$ 0.07 &   5.73 $\pm$ 0.45 &  1.98 $\pm$ 0.21 &           2.2 &           1.4 &           2.5 &               16.0 \\
          & 32   &   1.9 $\pm$ 0.02 &     7.9 $\pm$ 1.0 &   1.69 $\pm$ 0.2 &           1.2 &           1.1 &           2.0 &               13.1 \\
        NRDE$_2$  & 64   &  1.89 $\pm$ 0.04 &   5.54 $\pm$ 0.45 &  2.04 $\pm$ 0.07 &           0.3 &           0.3 &           1.7 &               11.6 \\
          & 128  &  1.86 $\pm$ 0.03 &   6.77 $\pm$ 0.42 &  1.95 $\pm$ 0.18 &           0.3 &           0.4 &           0.7 &               10.9 \\
          & 256  &  1.86 $\pm$ 0.09 &   5.64 $\pm$ 0.19 &   2.1 $\pm$ 0.19 &           0.1 &           0.1 &           0.5 &               10.5 \\
          & 512  &  1.81 $\pm$ 0.02 &   5.05 $\pm$ 0.23 &  2.17 $\pm$ 0.18 &           0.1 &           0.2 &           0.4 &               10.3 \\
          & 1024 &  1.93 $\pm$ 0.11 &    6.0 $\pm$ 0.19 &  2.41 $\pm$ 0.07 &           0.1 &           0.1 &           0.2 &               10.2 \\
          & 2048 &  \textbf{2.03 $\pm$ 0.03} &    \textbf{7.7 $\pm$ 1.46} &  2.55 $\pm$ 0.03 &           0.1 &           0.1 &           0.1 &               10.2 \\
        \hdashline\noalign{\vskip 0.5ex}
          & 2    &  \textbf{2.82 $\pm$ 0.08} &  \textbf{11.01 $\pm$ 0.28} &   4.1 $\pm$ 0.72 &           8.8 &           9.4 &           6.9 &              125.2 \\
          & 4    &  2.97 $\pm$ 0.23 &  \textbf{10.13 $\pm$ 0.62} &  \textbf{3.56 $\pm$ 0.44} &           3.2 &           4.1 &           2.6 &               71.6 \\
          & 8    &  \textbf{2.42 $\pm$ 0.19} &    \textbf{7.67 $\pm$ 0.4} &  \textbf{2.55 $\pm$ 0.13} &           2.9 &           3.2 &           3.1 &               43.3 \\
          & 16   &  \textbf{1.74 $\pm$ 0.05} &   \textbf{4.11 $\pm$ 0.61} &   \textbf{1.4 $\pm$ 0.06} &           1.4 &           1.4 &           6.5 &               29.1 \\
          & 32   & \textbf{1.67 $\pm$ 0.01} &     \textbf{4.5 $\pm$ 0.7} &  \textbf{1.61 $\pm$ 0.05} &           1.3 &           1.8 &           7.3 &               20.5 \\
        NRDE$_3$  & 64   &  \textbf{1.53 $\pm$ 0.08} &   \textbf{3.05 $\pm$ 0.36} &  \textbf{1.48 $\pm$ 0.14} &           0.4 &           1.9 &           3.3 &               17.9 \\
          & 128  &  \textbf{1.51 $\pm$ 0.08} &   \textbf{2.97 $\pm$ 0.45}$^*$ &  \textbf{1.37 $\pm$ 0.22} &           0.5 &           1.7 &           1.7 &               17.3 \\
          & 256  &  \textbf{1.51 $\pm$ 0.06} &    \textbf{3.4 $\pm$ 0.74} &  1.47 $\pm$ 0.07 &           0.3 &           0.7 &           0.6 &               16.6 \\
          & 512  &  \textbf{1.49 $\pm$ 0.08}$^*$ &   \textbf{3.46 $\pm$ 0.13} &  \textbf{1.29 $\pm$ 0.15}$^*$ &           0.3 &           0.4 &           0.4 &               15.4 \\
          & 1024 &  1.83 $\pm$ 0.33 &    \textbf{5.58 $\pm$ 2.5} &  \textbf{1.72 $\pm$ 0.31} &           0.2 &           0.1 &           0.1 &               15.7 \\
          & 2048 &  2.31 $\pm$ 0.27 &   9.77 $\pm$ 1.53 &  2.45 $\pm$ 0.18 &           0.1 &           0.1 &           0.1 &               15.6 \\
        \bottomrule
        \end{tabular}
    \end{center}
    \caption{Mean and standard deviation of the $L^2$ losses on the test set for each of the vitals signs prediction tasks (RR, HR, SpO$_2$) on the BIDMC dataset, across three repeats. Only mean times are shown for space. The memory usage is given as the mean over all three of the tasks as it was approximately the same for any task for a given depth and step. Error denotes that the model could not be run within GPU memory. The bold values denote the algorithm with the lowest test set loss for a fixed step size for each task.}
    \label{tab:bidmc_all}
\end{table*}

\end{document}